\newcommand{\titl}{Attack-Aware Noise Calibration for Differential Privacy}
\renewcommand{\paragraph}[1]{\medskip\noindent\textbf{#1.} }
\newcommand{\figwidth}{.45\linewidth}
\title{\titl}
\date{}
\author[1,2]{Bogdan Kulynych\footnote{Contributed equally.}}
\newcommand\CoAuthorMark{\footnotemark[\arabic{footnote}]} %
\author[3]{Juan Felipe Gomez\protect\CoAuthorMark}
\author[4]{\authorcr Georgios Kaissis}
\author[3]{Flavio du Pin Calmon}
\author[5]{Carmela Troncoso}
\affil[1]{Lausanne University Hospital (CHUV)}
\affil[2]{University of Lausanne}
\affil[3]{Harvard University}
\affil[4]{Technical University Munich}
\affil[5]{EPFL}
\newcommand{\figwidth}{.5\linewidth}
\newcommand{\titl}{Attack-Aware Noise Calibration \\ for Differential Privacy}
\renewcommand{\paragraph}[1]{\textbf{#1.} }
\author{
    Bogdan Kulynych\thanks{Contributed equally.} \\ Lausanne University Hospital (CHUV) \And
    \newcommand\CoAuthorMark{\footnotemark[\arabic{footnote}]} %
    Juan Felipe Gomez\protect\CoAuthorMark \\ Harvard University \AND
    Georgios Kaissis \\ Technical University Munich \And
    Flavio du Pin Calmon \\ Harvard University \And
    Carmela Troncoso \\ EPFL
}
\title{\titl}
\renewcommand{\cite}{\citep}
\crefname{equation}{Eq.}{Eqs.}
\DeclareMathOperator*{\E}{\mathbb{E}}
\DeclareMathOperator*{\argmin}{argmin}
\newcommand{\define}{~\smash{\triangleq}~}
\newcommand{\newterm}{\textit}
\newcommand{\sX}{\mathbb{X}}
\newcommand{\sD}{\mathbb{D}}
\newcommand{\sR}{\mathbb{R}}
\newcommand{\train}{M}
\newcommand{\dataset}{S}
\newcommand{\target}{\smash{^\star}}
\newcommand{\tradeoff}{T}
\newcommand{\fpr}{\alpha}
\newcommand{\fnr}{\beta}
\newcommand{\alg}{q}
\newcommand{\mechanism}{M}
\newcommand{\noiserv}{Z}
\newcommand{\noiseparams}{\omega}
\newcommand{\noiseparam}{\omega}
\newcommand{\noiseparamspace}{\Omega}
\newcommand{\Th}{\Theta}
\theoremstyle{plain}
\declaretheorem[name=Proposition,numberwithin=section]{proposition}
\declaretheorem[name=Theorem,numberwithin=section]{theorem}
\newtheorem{corollary}[theorem]{Corollary}
\theoremstyle{definition}
\newtheorem{definition}[theorem]{Definition}
\theoremstyle{remark}
\Crefname{definition}{Def.}{Defs.}
\definecolor{seabornblue}{HTML}{1f77b4}
\definecolor{seabornorange}{HTML}{ff7f0e}
\definecolor{seaborngreen}{HTML}{2ca02c}
\definecolor{seabornred}{HTML}{d62728}
\begin{document}

\maketitle

\begin{abstract}
Differential privacy (DP) is a widely used approach for mitigating privacy risks when training machine learning models on sensitive data. DP mechanisms add noise during training to limit the risk of information leakage. The scale of the added noise is critical, as it determines the trade-off between privacy and utility. 
The standard practice is to select the noise scale to satisfy a given \emph{privacy budget} $\varepsilon$. This privacy budget is in turn interpreted in terms of operational \emph{attack risks}, such as accuracy, sensitivity, and specificity of inference attacks aimed to recover information about the training data records. 
We show that first calibrating the noise scale to a privacy budget $\varepsilon$, and then translating $\varepsilon$ to attack risk leads to overly conservative risk assessments and unnecessarily low utility.
Instead, we propose methods to directly calibrate the noise scale to a desired attack risk level, bypassing the step of choosing $\varepsilon$. For a given notion of attack risk, our approach significantly decreases noise scale, leading to increased utility at the same level of privacy. We empirically demonstrate that calibrating noise to attack sensitivity/specificity, rather than $\varepsilon$, when training privacy-preserving ML models substantially improves model accuracy for the same risk level. Our work provides a principled and practical way to improve the utility of privacy-preserving ML without compromising on privacy.
\end{abstract}

\begin{figure*}[t]
    \centering
    \hspace{2.5em}
    \begin{subfigure}[b]{.45\textwidth}
        \centering
        \sffamily
        \textbf{\textcolor{seabornblue}{Standard Calibration}}\\[.5em]
        \resizebox{\textwidth}{!}{
        \begin{tabular}{ccccc}
            \text{DP Parameters} & $\rightarrow$ & \text{Noise Scale} & $\rightarrow$ & 
            \text{Attack Risk} \\
            \textcolor{gray}{\footnotesize $(\varepsilon, \delta)$} & & \textcolor{gray}{\footnotesize $\sigma$}
            & & 
            \textcolor{gray}{\footnotesize $1 - \fnr$ (TPR), $\fpr$ (FPR)} \\
        \end{tabular}
        }
        \centering
    \end{subfigure}
    \begin{subfigure}[b]{.45\textwidth}
        \centering
        \sffamily
        \textbf{\textcolor{seabornorange}{Our Method}}\\[.5em]
        \resizebox{0.65\textwidth}{!}{
        \begin{tabular}{ccc}
            \text{Attack Risk} & $\rightarrow$ & \text{Noise Scale} \\
            \textcolor{gray}{\footnotesize $1 - \fnr$ (TPR), $\fpr$ (FPR)} & & \textcolor{gray}{\footnotesize $\sigma$}
        \end{tabular}
        }
    \end{subfigure}\\
    \vspace{1.5em}
    \resizebox{.95\textwidth}{!}{\sffamily \textcolor{seabornorange}{\textbf{Direct calibration of noise to attack risk}} increases utility compared to \textcolor{seabornblue}{\textbf{the standard calibration}} at the same level of risk:}\\[.25em]
    \resizebox{.30\textwidth}{!}{\sffamily GPT-2 on SST-2 (text sentiment classification)}\\[.1em]
    \resizebox{\textwidth}{!}{\input{images/gpt2_err_rates_calibration.pgf}}\\[.1em]
    \resizebox{.25\textwidth}{!}{\sffamily CNN on CIFAR-10 (image classification)}
    \resizebox{\textwidth}{!}{\input{images/cifar10_err_rates_calibration.pgf}}
    \caption{Test accuracy (x-axis) of a privately finetuned GPT-2 on SST-2 text sentiment classification dataset (top) and a convolutional neural network on CIFAR-10 image classification dataset (bottom). The DP noise is calibrated to guarantee at most a certain level of privacy attack sensitivity (y-axis) at three possible attack false-positive rates $\fpr \in \{0.01, 0.05, 0.1\}$. See \cref{sec:experiments} for details.
    }
    \label{fig:cifar10-err-rates-calibration}
    \vspace{-1em}

\end{figure*}

\section{Introduction}
\label{sec:intro}
Machine learning and statistical models can leak information about individuals in their training data, which can be recovered by membership inference, attribute inference, and reconstruction attacks~\cite{fredrikson2015model,shokri2017membership,yeom2018privacy,balle2022reconstructing}.
The most common defenses against these attacks are based on differential privacy (DP)~\cite{dwork2014algorithmic}. Differential privacy introduces noise to either the data, the training algorithm, or the model parameters~\cite{chaudhuri2011differentially}. This noise provably limits the adversary's ability to run successful  attacks at the cost of reducing the utility of the model.

In DP, the parameters $\varepsilon$ and $\delta$ control the privacy-utility trade-off. These parameters determine the scale (e.g., variance) of the noise added during training: Smaller values of these parameters correspond to larger noise. Larger noise provides stronger privacy guarantees but reduces the utility of the trained model. Typically, $\delta$ is set to a small fixed value (usually between $10^{-8}$ and $10^{-5}$), leaving $\varepsilon$ as the primary tunable parameter. 
Without additional analyses, the values of parameters $(\varepsilon, \delta)$ alone do not provide a tangible and intuitive operational notion of privacy risk~\cite{nanayakkara2023chances}. This begs the question: how should practitioners, regulators, and data subjects decide on acceptable values of $\varepsilon$ and $\delta$ and calibrate the noise scale to achieve a desired level of protection?

A standard way of assigning  operational meaning to DP parameters is mapping them to \emph{attack risks}. One common approach is computing attacker's posterior belief (or equivalently, accuracy or advantage) of membership inference attacks, that concrete values of $(\varepsilon, \delta)$ allow~\cite{wood2018differential}. An alternative is to compute the trade-off between sensitivity and specificity of feasible membership inference attacks~\cite{wasserman2010statistical,kairouz2015composition,dong2019gaussian}, which was recently shown to also be directly related to success of record reconstruction attacks~\cite{hayes2024bounding, kaissis2023bounding}. Such approaches map $(\varepsilon, \delta)$ to a quantifiable level of risk for individuals whose data is present in the dataset.
Studies have shown that such risk-based measures are the most useful way to interpret the guarantees afforded by DP for practitioners and data subjects~\cite{cummings2021need, franzen2022private, nanayakkara2023chances}.

In this work, we show that directly calibrating the level of noise to satisfy a given level of attack risk, as opposed to satisfying a certain $\varepsilon$, enables a significant increase in utility (see \cref{fig:cifar10-err-rates-calibration}). We enable this direct calibration to attack risk by working under $f$-DP \cite{dong2019gaussian}, a hypothesis testing interpretation of DP. In particular, we extend the tight privacy analysis method by \citet{doroshenko2022connect} to directly estimate operational privacy risk notions in $f$-DP. Then, we
use our extended algorithm to directly calibrate the level of noise to satisfy a given level of attack risk.
Concretely, our contributions are:
\renewcommand{\thefootnote}{\fnsymbol{footnote}}
\begin{enumerate}[itemsep=0em]
    \vspace{-.5em}
    \item We provide efficient methods for calibrating noise to (a) maximum accuracy (equivalently, advantage), (b) sensitivity and specificity of membership inference attacks, in any DP mechanism, including DP-SGD~\cite{abadi2016deep} with arbitrarily many steps.
    \item We empirically show that our calibration methods reduce the required noise scale for a given level of privacy risk, up to 2$\times$ as compared to standard methods for choosing DP parameters. In a private language modeling task with GPT-2~\cite{radford2019language}, we demonstrate that the decrease in noise can translate to a 18 p.p. gain in classification accuracy. 
    \item We demonstrate that relying on membership inference accuracy as an interpretation of privacy risk, as is common practice, can increase attack power in privacy-critical regimes, and that calibration for sensitivity and specificity does not suffer from this drawback.
    \item We provide a Python package which implements our algorithms for analyzing DP mechanisms in terms of the interpretable $f$-DP guarantees, and calibrating to operational risks:
    \begin{center}
    \href{https://github.com/Felipe-Gomez/riskcal}{github.com/Felipe-Gomez/riskcal}
    \end{center}
    \vspace{-.5em}
\end{enumerate}
Ultimately, we advocate for practitioners to calibrate the noise level in privacy-preserving machine learning algorithms to a sensitivity and specificity constraint under $f$-DP as outlined in \cref{sec:err-rates-calibration}.

\paragraph{Related Work} Prior work has studied methods for communicating the privacy guarantees afforded by differential privacy~\cite{nanayakkara2023chances,nanayakkara2022visualizing,franzen2022private,mehner2021towards,wood2018differential}, and introduced various principled methods for choosing the privacy parameters ~\cite{abowd2015revisiting,nissim2014redrawing,hsu2014differential}. Unlike our approach, these works assume that the mechanisms are calibrated to a given $\varepsilon$ privacy budget parameter, and do not aim to directly set the privacy guarantees in terms of operational notions of privacy risk.

\citet{cherubin2024closed, ghazi2023total, izzo2022provable, mahloujifar2022optimal} use variants of DP that directly limit the advantage of membership inference attacks. We show that calibrating noise to a given level of advantage can increase privacy risk in security-critical regimes and provide methods that mitigate this issue. \citet{leemann2024gaussian} provide methods for evaluating the success of membership inference attacks under a weaker threat model than in DP. Unlike their work, we preserve the standard strong threat model in differential privacy but set and report the privacy guarantees in terms of an operational notion of risk under $f$-DP as opposed to the $\varepsilon$ parameter.

\section{Problem Statement}

\subsection{Preliminaries}
\paragraph{Setup and notation} Let $\sD^n$ denote the set of all datasets of size $n$ over a space $\sD$, and let $\dataset \simeq \dataset'$ denote a neighboring relation, e.g. $\dataset, \dataset'$ that differ by one datapoint. We study randomized algorithms (\emph{mechanisms}) $\mechanism(S)$ that take as input a dataset $\dataset \in 2^{\sD}$, and output the result of a computation, e.g., statistical queries or an ML model. We denote the output domain of the mechanism by $\Th$. For ease of presentation, we mainly consider randomized mechanisms that are parameterized by a single noise parameter $\noiseparam \in \noiseparamspace$, but our results extend to mechanisms with multiple parameters. For example, in the \newterm{Gaussian mechanism}~\cite{dwork2014algorithmic}, $\mechanism(S) = \alg(S) + \noiserv$, where $\noiserv \sim \mathcal{N}(0, \sigma^2)$ and $\alg(S)$ is a non-private statistical algorithm, the parameter is $\noiseparam = \sigma$ with $\noiseparamspace = \sR^{\geq 0}$. We denote a parameterized mechanism by $\mechanism_\noiseparams(S)$. We summarize the notation in \cref{tab:notation} in the Appendix.

\paragraph{Differential Privacy}
\label{sec:background-dp}
For any $\gamma \geq 0$, we define the hockey-stick divergence from distribution $P$ to $Q$ over a domain $\mathcal{O}$ by
\begin{equation}\label{eq:hockeystick-def}
    D_{\gamma}(P ~\|~ Q) \define \sup_{E}  Q(E) - \gamma P(E)
\end{equation}
where the supremum is taken over all measurable sets $E \subseteq \mathcal{O}$. We define differential privacy (DP)~\cite{dwork2006calibrating} as follows:

\begin{definition}\label{def:dp}
    A mechanism $\mechanism(\cdot)$ satisfies $(\varepsilon, \delta)$-DP iff  $\sup_{S \simeq S'} D_{e^\varepsilon}(\mechanism(\dataset)~\|~ \mechanism(\dataset')) \leq \delta$.
\end{definition}
Lower values of $\varepsilon$ and $\delta$ mean more privacy which in turn requires more noise, and vice versa. In the rest of the paper we assume that a larger value of the parameter $\noiseparams \in \noiseparamspace$ for $\noiseparamspace \subseteq \sR$, e.g., standard deviation of Gaussian noise $\noiseparams = \sigma$ in the Gaussian mechanism, means that the mechanism $\mechanism_\noiseparams(\cdot)$ is more noisy, which translates into a higher level of privacy (smaller $\varepsilon, \delta$), but lower utility.

Most DP algorithms satisfy a collection of $(\varepsilon, \delta)$-DP guarantees. We define the \newterm{privacy profile}~\cite{balle2018improving}, or \newterm{privacy curve}~\cite{gopi2021numerical,alghamdi2022saddle} of a mechanism as:
\begin{definition}\label{def:profile}
    A parameterized mechanism $\mechanism_\noiseparams(\cdot)$ has a privacy profile $\varepsilon_\noiseparams: [0,1] \rightarrow \sR$ if for every $\delta \in [0, 1]$, $\mechanism_\noiseparams(\cdot)$ is $(\varepsilon(\delta), \delta)$-DP. 
\end{definition}
We refer to the function $\delta_\noiseparams(\varepsilon)$, defined analogously, also as the privacy profile. 

\paragraph{DP-SGD} A common algorithm for training neural networks with DP guarantees is DP-SGD~\cite{abadi2016deep}. The basic building block of DP-SGD is the \newterm{subsampled Gaussian mechanism}, defined as $\mechanism(S) = \alg(\smash{\mathsf{PoissonSample}_p} \circ S) + Z$, where $Z \sim \mathcal{N}(0, \Delta_2^2 \cdot \sigma^2 \cdot I_d)$, and $\smash{\mathsf{PoissonSample}_p}$ is a procedure which subsamples a dataset $S$ such that every record has the same probability $p \in (0, 1)$ to be in the subsample. DP-SGD, parameterized by $p, \sigma,$ and $T \geq 1$, is a repeated application of the subsampled Gaussian mechanism: $\mechanism^{(1)} \circ \mechanism^{(2)} \circ \cdots \circ \mechanism^{(T)}(S)$, where $\alg^{(i)}(\cdot)$ is a single step of gradient descent with per-record gradient clipping to $\Delta_2$ Euclidean norm. In line with a standard practice~\cite{dpfy}, we regard all parameters but $\sigma$ as fixed, thus $\noiseparams = \sigma$.

Privacy profiles for mechanisms such as DP-SGD are computed via numerical algorithms called \newterm{accountants}~\cite[see, e.g., ][]{abadi2016deep,gopi2021numerical,doroshenko2022connect,alghamdi2022saddle}. 
These algorithms compute the achievable privacy profile to accuracy nearly matching the lower bound of a privacy audit where the adversary is free to choose the entire (pathological or realistic) training dataset \cite{nasr2021adversary, nasr2023tight}. Given these results, we regard the analyses of these accountants as tight, and use them for calibration to a particular $(\varepsilon,\delta)$-DP constraint. 

\paragraph{Standard Calibration} The procedure of choosing the parameter $\noiseparam \in \noiseparamspace$ to satisfy a given level of privacy is called \newterm{calibration}. In \newterm{standard calibration}, one chooses $\noiseparam$ given a target DP guarantee $\varepsilon\target$ and an accountant that supplies a privacy profile $\varepsilon_\noiseparams(\delta)$ for any noise parameter $\noiseparams \in \noiseparamspace$, to ensure that $M_\noiseparams(S)$ satisfies $(\varepsilon\target, \delta\target)$-DP:
\begin{align}
    \vspace{-1em}
    \label{eq:standard-calibration}
    \min_{\noiseparams \in \noiseparamspace} \: \noiseparams \quad\text{ s.t. } \varepsilon_{\noiseparams}(\delta\target) \geq \varepsilon\target,
\end{align}
with $\delta\target$ set by convention to $\delta\target = \nicefrac{1}{c \cdot n}$, where $n$ is the dataset size, and $c>1$~\cite[see, e.g.,][]{dpfy,near2023guidelines}. 
The parameter $\varepsilon\target$ is also commonly chosen by convention between $2$ and $10$ for privacy-persevering ML algorithms with practical utility~\cite{dpfy}.
In \cref{eq:standard-calibration} and the rest the paper we denote by $\target$ the target value of privacy risk. 

After calibration, the $(\varepsilon, \delta)$ parameters are often mapped to some operational notation of privacy attack risk for interpretability. In the next section, we introduce the hypothesis testing framework of DP,  $f$-DP, and the notions of risk that $(\varepsilon, \delta)$ parameters are often mapped to. In contrast to standard calibration, in \cref{sec:intro-to-calibration}, we calibrate $\noiseparams$ to directly minimize these privacy risks.

\subsection{Operational Privacy Risks}
\label{sec:operational-meanings}

We can interpret differential privacy through the lens of membership inference attacks (MIAs) in the so-called \newterm{strong-adversary model}~\cite[see, e.g., ][]{nasr2021adversary}. In this framework, the adversary aims to determine whether a given output $\theta \in \Theta$ came from $\mechanism(S)$ or $\mechanism(S')$, where $S' = S \cup \{z\}$ for some target example $z \in \sD$.\footnote{We use add relation in this exposition, i.e., $S \simeq S'$ iff $S' = S \cup \{z\}$, but our results hold for any relation.} The adversary has access to the mechanism $\mechanism(\cdot)$, the dataset $S$, and the target example $z \in \sD$. Such an attack is equivalent to a binary hypothesis test~\cite{wasserman2010statistical, kairouz2015composition, dong2019gaussian}:
\begin{align}
H_0: \theta \sim \mechanism(S), \quad H_1: \theta \sim \mechanism(S'),
\end{align}
where the MIA is modelled as a test $\phi: \Theta \rightarrow [0,1]$ that maps a given mechanism output $\theta$ to the probability of the null hypothesis $H_0$ being rejected. We can analyze this hypothesis test through the trade-off between the achievable \newterm{false positive rate} (FPR) $\fpr_\phi \define \smash{\E_{M(S)}}[\phi]$ and \newterm{false negative rate} (FNR) $\fnr_\phi \define 1 - \smash{\E_{M(S')}}[\phi]$, where the expectations are taken over the coin flips in the mechanism.\footnote{Note that sensitivity (TPR) is $1 - \beta$ and specificity (TNR) is $1 - \alpha$.} \citet{dong2019gaussian} formalize the \newterm{trade-off function} and define \newterm{$f$-DP} as follows:
\begin{definition}\label{def:trade-off}
    A \newterm{trade-off function} $\tradeoff(\mechanism(S), \mechanism(S')): [0, 1] \rightarrow [0, 1]$ outputs the FNR of the most powerful attack at any given level $\alpha \in [0,1]$:
\begin{equation}
    \tradeoff(\mechanism(S), \mechanism(S'))(\fpr) = \inf_{\phi:~\Theta \rightarrow [0, 1]} \{ \fnr_{\phi} \mid \fpr_\phi \leq \fpr\}
\end{equation}
See \cref{fig:dp-curve} in the Appendix for an illustration.
\end{definition}
\begin{definition}\label{def:fdp}\label{eq:def_of_fdp}
A mechanism $\mechanism(\cdot)$ satisfies $f$-DP, where $f$ is the trade-off curve for some other mechanism, if for all $\alpha \in [0,1]$, we have
$\inf_{S \simeq S'} \tradeoff(\mechanism(S), \mechanism(S'))(\alpha) \geq f(\alpha)$.
\end{definition}
Next, we state the equivalence between $(\varepsilon, \delta)$-DP guarantees and $f$-DP guarantees.
\begin{proposition}[\citet{dong2019gaussian}]\label{stmt:dp-to-f}\label{stmt:profiles-to-f}
    If a mechanism $\mechanism(\cdot)$ is $(\varepsilon, \delta)$-DP, then it is $f$-DP with 
    \begin{equation}
        \label{eq:dp-to-f}
        f(\fpr) = \max \{ 0, \ 1 - \delta - e^{\varepsilon} \fpr, \  e^{-\varepsilon} \cdot (1 - \delta - \fpr)\}. 
    \end{equation}
    Moreover, a mechanism $\mechanism(\cdot)$ satisfies $(\varepsilon(\delta), \delta)$-DP for all $\delta \in [0, 1]$ iff it is $f$-DP with 
    \begin{equation}
        \label{eq:profiles-to-f}
        f(\fpr) = \sup_{\delta \in [0, 1]} \max \{ 0, \ 1 - \delta - e^{\varepsilon(\delta)} \fpr, \  e^{-\varepsilon(\delta)} \cdot (1 - \delta - \fpr)\}. 
    \end{equation}
    \vspace{-.8em}
\end{proposition}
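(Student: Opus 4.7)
The plan is to prove both parts by moving between the hockey-stick formulation of $(\varepsilon,\delta)$-DP and the trade-off-function formulation of $f$-DP, using the identity that a randomized test $\phi : \Theta \to [0,1]$ has expectations $\fpr_\phi = \E_{M(S)}[\phi]$ and $1 - \fnr_\phi = \E_{M(S')}[\phi]$. The workhorse observation, following from the layer-cake decomposition $\phi = \int_0^1 \mathbf{1}\{\phi \geq t\}\, dt$, is that $D_\gamma(P\|Q) \leq \delta$ is equivalent to the pointwise inequality $\E_Q[\phi] - \gamma \E_P[\phi] \leq \delta$ for every measurable $\phi : \Theta \to [0,1]$, not just indicators. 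I would isolate this as a short lemma before the main argument.

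For part one (the $(\varepsilon,\delta)$-DP $\Rightarrow$ $f$-DP implication): fix $S \simeq S'$ and a test $\phi$. Applying the lemma with $(P,Q) = (M(S), M(S'))$ gives $1 - \fnr_\phi \leq e^\varepsilon \fpr_\phi + \delta$, so $\fnr_\phi \geq 1 - \delta - e^\varepsilon \fpr_\phi$. Applying it instead with $\psi = 1 - \phi$ (and using symmetry of $\simeq$) yields $1 - \fpr_\phi \leq e^\varepsilon \fnr_\phi + \delta$, i.e.\ $\fnr_\phi \geq e^{-\varepsilon}(1 - \delta - \fpr_\phi)$. Combining these with the trivial $\fnr_\phi \geq 0$ and taking the infimum over tests $\phi$ with $\fpr_\phi \leq \fpr$ shows $\tradeoff(M(S), M(S'))(\fpr) \geq f(\fpr)$ for the $f$ of \cref{eq:dp-to-f}; taking the infimum over $S \simeq S'$ gives the $f$-DP statement.

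For part two, the forward direction follows from part one applied at each $\delta$, followed by a pointwise supremum over $\delta \in [0,1]$. The reverse direction is the main obstacle, because we must recover a divergence bound from a lower bound on the full trade-off function. Here I would fix $\delta$ and an arbitrary event $E \subseteq \Theta$, test $\phi = \mathbf{1}_E$, so $\fpr_\phi = M(S)(E)$ and $\fnr_\phi = 1 - M(S')(E)$. The hypothesis $\fnr_\phi \geq f(\fpr_\phi)$ combined with the specific branch $1 - \delta - e^{\varepsilon(\delta)}\fpr_\phi$ inside the supremum in \cref{eq:profiles-to-f} rearranges directly to $M(S')(E) - e^{\varepsilon(\delta)} M(S)(E) \leq \delta$. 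Taking the supremum over $E$ and then over $S \simeq S'$ yields $(\varepsilon(\delta), \delta)$-DP for the chosen $\delta$, and since $\delta$ was arbitrary, for all $\delta \in [0,1]$.

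The only subtlety is ensuring that the supremum in \cref{eq:profiles-to-f} is attained (or sufficiently well-approximated) at the $\delta$ we select; since we pick $\delta$ first and only use one branch of the max inside, this is not actually an issue — monotonicity of the relevant expressions in $\delta$ is not required. The second mild subtlety is the symmetry of the neighbouring relation used in part one; if one wishes to accommodate asymmetric relations, the same proof goes through by taking the symmetrized supremum $\sup_{S \simeq S'} \max\{D_{e^\varepsilon}(M(S)\|M(S')), D_{e^\varepsilon}(M(S')\|M(S))\}$, which is standard in the DP literature.
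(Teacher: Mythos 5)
The paper does not prove this proposition; it is stated as a cited result of \citet{dong2019gaussian}, so there is no in-paper proof to compare against. Your argument is correct and follows the standard route of the original source: the layer-cake equivalence between the hockey-stick bound $D_{e^\varepsilon}(P\,\|\,Q)\leq\delta$ and the pointwise inequality $\E_Q[\phi]-e^\varepsilon\E_P[\phi]\leq\delta$ over all randomized tests $\phi$ gives the branch $1-\delta-e^\varepsilon\alpha$ directly, and applying it to the reversed pair with the complementary test $1-\phi$ gives the branch $e^{-\varepsilon}(1-\delta-\alpha)$. Your handling of the second part — pointwise supremum over $\delta$ for the forward direction, and for the converse fixing $\delta$, specializing to indicator tests, and invoking only the single branch $1-\delta-e^{\varepsilon(\delta)}\alpha$ to recover the hockey-stick bound — is exactly the standard argument, and your closing remark that the symmetry of $\simeq$ (or an explicit symmetrization of the divergence supremum) is what makes the two branches available is the right thing to flag.
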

We overview three particular notions of attack risk: advantage/accuracy of MIAs, FPR/FNR of MIAs, and reconstruction robustness. These risks can be thought of as summary statistics of the $f$ curve. 

\paragraph{Advantage/Accuracy}
\citet{wood2018differential} proposed\footnote{\citet{wood2018differential} used \emph{posterior belief}, which is equivalent to accuracy under uniform prior.} to measure the attack risk as the maximum achievable attack accuracy. To avoid confusion with task accuracy, we use \newterm{advantage} over random guessing, which is the difference between the attack TPR $1 - \beta_\phi$ and FNR $\alpha_\phi$:
\begin{equation}\label{eq:adv}
    \eta \define \sup_{S \simeq S'} \sup_{\phi:~\Theta \rightarrow [0, 1]} 1 - \beta_\phi - \alpha_\phi.
\end{equation}
The advantage $\eta$ is a linear transformation of the maximum attack accuracy $\sup \nicefrac{1}{2} \cdot (1 - \beta_\phi) + \nicefrac{1}{2} \cdot (1 - \alpha_\phi)$, where supremum is over $S \simeq S'$ and $\phi: \Theta \rightarrow [0, 1]$. Moreover, $\eta$ can be obtained from a fixed point $\alpha^* = f(\alpha^*)$ of the $f$ curve as $1 - 2\alpha^*$, and it is bounded given an $(\varepsilon, \delta)$-DP guarantee:
\begin{proposition}[\citet{kairouz2015composition}]\label{stmt:dp-to-adv} %
If a mechanism $\mechanism(\cdot)$ is $(\varepsilon, \delta)$-DP, then we have:
\begin{align}
    \eta \leq \frac{e^\varepsilon -1  + 2\delta}{e^\varepsilon + 1}.
    \label{eq:dp-to-adv}
\end{align}
\end{proposition}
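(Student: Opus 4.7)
The plan is to reduce the advantage to a total-variation bound between $M(S)$ and $M(S')$, and then optimize the two inequalities that $(\varepsilon,\delta)$-DP places on complementary events.

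First I would rewrite the advantage in a tractable form. By linearity of expectation, for any test $\phi$,
\[
1 - \beta_\phi - \alpha_\phi \;=\; \E_{M(S')}[\phi] - \E_{M(S)}[\phi],
\]
so $\eta = \sup_{S \simeq S'} \sup_\phi \bigl(\E_{M(S')}[\phi] - \E_{M(S)}[\phi]\bigr)$, which is the worst-case total variation $\sup_{S \simeq S'} D_1(M(S') \,\|\, M(S))$. The inner sup is attained by an indicator test $\phi = \mathbf{1}_E$, so writing $P = M(S)$, $Q = M(S')$, $a = P(E)$, and $b = Q(E)$, it suffices to bound $b - a$ uniformly over $E$ and $S \simeq S'$.

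Second, I would apply $(\varepsilon,\delta)$-DP to $E$ and to $E^c$ (using both directions, since the neighboring relation is symmetric) to obtain
\[
b \leq e^\varepsilon a + \delta, \qquad 1 - a \leq e^\varepsilon (1 - b) + \delta.
\]
The first yields $b - a \leq (e^\varepsilon - 1)\,a + \delta$, which is increasing in $a$; the second rearranges to $b - a \leq (1 - e^{-\varepsilon})(1 - a) + e^{-\varepsilon}\delta$, which is decreasing in $a$. The feasible maximum of $b - a$ therefore lies at their intersection.

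Third, solving
\[
(e^\varepsilon - 1)\,a + \delta \;=\; (1 - e^{-\varepsilon})(1 - a) + e^{-\varepsilon}\delta
\]
gives $a^* = (1-\delta)/(1+e^\varepsilon)$, and substituting back yields
\[
b - a \;\leq\; (e^\varepsilon - 1)\cdot\frac{1-\delta}{1+e^\varepsilon} + \delta \;=\; \frac{e^\varepsilon - 1 + 2\delta}{e^\varepsilon + 1},
\]
as claimed. The only subtlety is bookkeeping the two directions of DP and checking that the intersection really is the feasible maximum; no deeper obstacle appears. An equally short route using tools already developed in the paper is to invoke Proposition~\ref{stmt:dp-to-f} to get the explicit piecewise-linear trade-off function $f(\alpha) = \max\{0,\ 1 - \delta - e^\varepsilon \alpha,\ e^{-\varepsilon}(1 - \delta - \alpha)\}$, recall from the remark before the statement that $\eta \leq 1 - 2\alpha^*$ for the fixed point $\alpha^* = f(\alpha^*)$, and solve $\alpha^* = 1 - \delta - e^\varepsilon \alpha^*$ to recover $\alpha^* = (1-\delta)/(1+e^\varepsilon)$ and hence the same bound.
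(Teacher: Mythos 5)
Your proof is correct and complete. Note that the paper does not actually supply a proof of this proposition --- it is cited to \citet{kairouz2015composition} and used as an external fact --- so there is no paper argument to compare against; your derivation fills the gap. The reduction to total variation via $1-\beta_\phi-\alpha_\phi = \E_{M(S')}[\phi]-\E_{M(S)}[\phi]$ and the optimality of indicator tests is standard, and the two-sided application of the DP inequality (to $E$ and $E^c$, using the symmetry of the neighboring relation) gives one bound increasing in $a=P(E)$ and one decreasing, whose envelope is maximized at the crossing point $a^* = (1-\delta)/(1+e^\varepsilon)$, yielding $\eta \le (e^\varepsilon-1+2\delta)/(e^\varepsilon+1)$ as required. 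The only minor caveat is that you should confirm $a^* \in [0,1]$ (it is, for any $\delta\in[0,1]$ and $\varepsilon\ge 0$) so that the crossing point is genuinely attainable by some event. Your alternative route via the trade-off function of Proposition~\ref{stmt:dp-to-f} and the fixed-point characterization $\eta\le 1-2\alpha^*$ is also sound and arguably more in the spirit of the paper, since it reuses machinery the paper has already set up; it gives the same $\alpha^* = (1-\delta)/(1+e^\varepsilon)$ and hence the identical bound.
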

\paragraph{FPR/FNR Risk}
Recent work \cite{carlini2022membership,rezaei2021difficulty} has argued that MIAs are a relevant threat only when the attack true positive rate $1 - \fnr_\phi$ is high at low enough $\fpr_\phi$. As a concrete notion of risk, we thus consider minimum level of attack FNR $\fnr\target$ within an FPR region $\fpr \in [0, \fpr\target]$, where $\fpr\target$ is a low value. This approach is similar to the statistically significant p-values often used in the sciences. Following the scientific standards and \citet{carlini2022membership}, we consider $\fpr\target \in \{0.01, 0.05, 0.1\}$. 

\paragraph{Reconstruction Robustness}
Another privacy threat is the reconstruction of training data records~\cite[see, e.g., ][]{balle2022reconstructing}. Denoting by $R(\theta; z)$ an attack that aims to reconstruct $z$, its success probability can be formalized as $\rho \define \Pr[\ell(z, R(\theta; z)) \leq \gamma]$ over $\theta \sim \mechanism(S \cup \{ z \}), z \sim \pi$ for some loss function $\ell: \sD^2 \rightarrow \sR$ and prior $\pi$. \citet{kaissis2023bounding} showed that MIA error rates bound reconstruction success as $\rho \leq 1 - f(\kappa_\gamma)$ for an appropriate choice of $\kappa_\gamma$. Therefore, the FPR/FNR trade-off curve can also be thought as a notion of robustness to reconstruction attacks.

\subsection{Our Objective: Attack-Aware Noise Calibration}
\label{sec:intro-to-calibration}
The standard practice in DP is to calibrate the noise scale $\noiseparams$ of a mechanism $\mechanism_\noiseparams(\cdot)$ to some target $(\varepsilon\target,\delta \target)$-DP guarantee, with $\varepsilon\target$ from a recommended range, e.g., $\varepsilon\target \in [2, 10]$, and $\delta\target$ fixed to $\delta\target < \nicefrac{1}{n}$, as in \cref{eq:standard-calibration}. Then, the privacy guarantees provided by the chosen $(\varepsilon\target, \delta\target)$ are obtained by mapping these values to bounds on sensitivity and specificity (by \cref{stmt:dp-to-f}) or advantage (by \cref{stmt:dp-to-adv}) of membership inference attacks. In this work, we show that if the goal is to provide an operational and interpretable guarantee such as attack advantage or FPR/FNR, this approach leads to unnecessarily pessimistic noise requirements and a deterioration in utility due to the intermediate step of setting $(\varepsilon\target, \delta\target)$. We show it is possible to skip this intermediate step by using the hypothesis-testing interpretation of DP to \emph{directly} calibrate noise to operational notions of privacy risk. In practice, this means replacing the constraint in \cref{eq:standard-calibration} with an operational notion of risk:
\begin{align}
    \label{eq:generic-calibration}
    \min_{\noiseparams \in \noiseparamspace} \: \noiseparams \quad\text{ s.t. } \texttt{risk}_\noiseparams \leq \texttt{threshold}\target.
\end{align}
Solving this optimization problem requires two components. First, a way to optimize $\noiseparams$ given a method to compute $\texttt{risk}_\noiseparams$. As we assume that risk is monotonic in $\noiseparams$, \cref{eq:generic-calibration} can be solved
via binary search~\cite[see, e.g.,][]{pytorch} using calls to the $\texttt{risk}_\noiseparams$ function to an arbitrary precision. Second, we need a way to compute $\texttt{risk}_\noiseparams$ for any value $\noiseparams$. In the next section, we provide efficient methods for doing so for general DP mechanisms, including composed mechanisms such as DP-SGD, by extending the tight privacy analysis from \citet{doroshenko2022connect} to computing $f$-DP. Having these methods, we instantiate \cref{eq:generic-calibration} for the notions of risks introduced in \cref{sec:operational-meanings}.

\section{Numeric Calibration to Attack Risks}
\label{sec:method}
In this section, we provide methods for calibrating DP mechanisms to the notions of privacy risk in \cref{sec:operational-meanings}. As a first step, we introduce the core technical building blocks of our calibration method: methods for evaluating advantage $\eta_\noiseparams$ and the trade-off curve $f_\noiseparams(\alpha)$ for a given value of $\noiseparams$.

\paragraph{Dominating Pairs and PLRVs}
We make use of two concepts, originally developed in the context of computing tight privacy profiles under composition: \newterm{dominating pairs}~\cite{characteristic} and \newterm{privacy loss random variables} (PLRV)~\cite{dwork2016concentrated}.
\begin{definition}\label{def:dom-pair}
    We say that a pair of distributions $(P,Q)$ is a \newterm{dominating pair} for a mechanism $\mechanism(\cdot)$ if for every $\varepsilon \in \mathbb{R}$, we have $\sup_{S \simeq S'} D_{e^\varepsilon}(\mechanism(S) ~\|~ \mechanism(S')) \leq D_{e^\varepsilon}(P ~\|~ Q)$.
\end{definition}
Importantly, a dominating pair also provides a lower bound on the trade-off curve of a mechanism: 
\begin{restatable}{proposition}{pairstof}
\label{stmt:pairs-to-f}
    If $(P,Q)$ is a dominating pair for a mechanism $\mechanism$, then for $\fpr \in [0,1]$,
    \begin{align}
        \inf_{S \simeq S'} T(M(S), M(S'))(\fpr) \geq T(P,Q)(\fpr).
    \end{align}
\end{restatable}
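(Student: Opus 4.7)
The plan is to invoke the equivalence between $(\varepsilon,\delta)$-DP privacy profiles and $f$-DP guarantees stated in \cref{stmt:profiles-to-f}. The core idea is that the dominating-pair hypothesis transfers the entire family of $(\varepsilon,\delta)$ guarantees enjoyed by $(P,Q)$ onto $\mechanism$, and this family is in turn equivalent to the $f$-DP guarantee whose trade-off function is $\tradeoff(P,Q)$.

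First, I would translate the dominating-pair condition into a privacy profile for $\mechanism$. Writing $\delta(\varepsilon) \define D_{e^\varepsilon}(P \infdiv Q)$, \cref{def:dom-pair} combined with \cref{def:dp} gives that $\mechanism$ is $(\varepsilon, \delta(\varepsilon))$-DP for every $\varepsilon \in \sR$; equivalently, the privacy profile of $\mechanism$ is pointwise no worse than that of $(P,Q)$.

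Next, I would apply \cref{stmt:profiles-to-f} (specifically \cref{eq:profiles-to-f}) to this profile to conclude that $\mechanism$ is $f_{(P,Q)}$-DP, where $f_{(P,Q)}$ is the curve obtained by substituting $\delta(\varepsilon)$ into \cref{eq:profiles-to-f}. By the definition of $f$-DP (\cref{def:fdp}) this yields $\inf_{S \simeq S'} \tradeoff(\mechanism(S), \mechanism(S'))(\fpr) \geq f_{(P,Q)}(\fpr)$. The remaining step is to identify $f_{(P,Q)}$ with $\tradeoff(P,Q)$: considering the trivial ``mechanism'' whose two outputs on a unique neighboring pair are $P$ and $Q$, its tight privacy profile is precisely $\delta(\varepsilon)$ and its tight trade-off function is $\tradeoff(P,Q)$, so the formula in \cref{eq:profiles-to-f} must produce $\tradeoff(P,Q)$ exactly.

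The main obstacle is this last identification: showing that $f_{(P,Q)}$ equals $\tradeoff(P,Q)$ pointwise rather than only lower-bounding it. This is the nontrivial direction of the duality between hockey-stick divergences and trade-off curves. Concretely, the Neyman--Pearson lemma implies that for each $\varepsilon$ a likelihood-ratio test at threshold $e^\varepsilon$ saturates the bound $(1 - \fnr_\phi) - e^\varepsilon \fpr_\phi \leq D_{e^\varepsilon}(P \infdiv Q)$, and sweeping $\varepsilon$ (together with the symmetric inequality obtained by considering $1 - \phi$) recovers every point on $\tradeoff(P,Q)$. A more self-contained variant bypasses \cref{stmt:profiles-to-f} entirely by applying the variational form of hockey-stick divergence (\cref{eq:hockeystick-def}) directly to any test $\phi$ on $\mechanism$ satisfying $\E_{\mechanism(S)}[\phi] \leq \fpr$, which gives $1 - \E_{\mechanism(S')}[\phi] \geq 1 - e^\varepsilon \fpr - D_{e^\varepsilon}(P \infdiv Q)$ for every $\varepsilon$, and then taking the supremum over $\varepsilon$.
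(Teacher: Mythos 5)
Your proof is correct and follows essentially the same route as the paper's: both arguments rest on the Legendre-type duality between the hockey-stick privacy profile $\delta(\varepsilon) = D_{e^\varepsilon}(P \infdiv Q)$ and the trade-off function, and both require the Neyman--Pearson identification of the resulting curve with $T(P,Q)$ as the final step. The paper instead quotes Lemma 20 of Zhu et al.\ (which is precisely the conjugate relation $D_{e^\varepsilon}(P\infdiv Q) = 1 + f^*(-e^\varepsilon)$ that \cref{eq:profiles-to-f} inverts) and then takes convex conjugates, while you invoke \cref{stmt:profiles-to-f} directly and also give a more self-contained variational argument; these are the same calculation in different clothing, and you are right to flag the tightness of the $\delta(\varepsilon)$-to-$f$ conversion as the one nontrivial step.
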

The proofs of this and all the following statements are in \cref{app:proofs}. 
\cref{stmt:pairs-to-f} implies that a mechanism $M(\cdot)$ is $f$-DP with $f = T(P,Q)$. Next, we introduce privacy loss random variables, which provide a natural parameterization of the curve $T(P,Q)$.
\begin{definition}\label{def:plrv}
    Suppose that a mechanism $\mechanism(\cdot)$ has a discrete-valued dominating pair $(P, Q)$. Then, we define the \emph{privacy loss random variables} (PLRVs) $(X, Y)$ as $Y \define \log \nicefrac{Q(o)}{P(o)}$, with $o \sim Q$, and $X \define \log \nicefrac{Q(o')}{P(o')}$ with $o' \sim P$.
\end{definition} 
We can now state the result which serves as a main building block for our calibration algorithms, and forms the main theoretical contribution of our work.
\begin{restatable}[Accounting for advantage and $f$-DP with PLRVs]{theorem}{plrvtorisks}
    \label{stmt:plrv-to-risks}
    Suppose that a mechanism $\mechanism(\cdot)$ has a discrete-valued dominating pair $(P, Q)$ with associated PLRVs $(X,Y)$. The attack advantage $\eta$ for this mechanism is bounded:
    \begin{equation}\label{eq:plrv-to-adv}
    \eta \leq \Pr[Y > 0] - \Pr[X > 0].
    \end{equation}
    Moreover, for any $\tau \in \mathbb{R} \cup \{\infty, -\infty\}$ and $\gamma \in [0,1]$, define
    \begin{equation}
        \beta^*(\tau, \gamma) = \Pr[Y \leq \tau] - \gamma \Pr[Y = \tau].
        \label{eq:plrv-to-f}
    \end{equation}
    For any level $\fpr \in [0,1]$, choosing $\tau = (1-\alpha)$-quantile of $X$ and $\gamma = 
    \frac{\alpha - \Pr[X > \tau]}{\Pr[X=\tau]}$ guarantees that $T(P,Q)(\fpr) = \beta^*(\tau, \gamma)$.
\end{restatable}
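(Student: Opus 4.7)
The plan is to treat the two claims separately but with a common thread: in both cases, the key is that the Neyman--Pearson lemma identifies the optimal (possibly randomized) likelihood-ratio test between $P$ and $Q$, and that \cref{stmt:pairs-to-f} lifts statements about the dominating pair $(P,Q)$ to statements about the actual mechanism $M(\cdot)$.

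For \cref{eq:plrv-to-adv}, I would first recognize that the attack advantage $\eta$ equals the supremum of $\E_{M(S')}[\phi] - \E_{M(S)}[\phi]$ over neighboring $S \simeq S'$ and tests $\phi$, which coincides with the total variation distance between $M(S)$ and $M(S')$. Applying \cref{stmt:pairs-to-f} at $\fpr = \Pr[X > 0]$ (or, equivalently, invoking the data-processing form of the dominating-pair definition at $\varepsilon=0$), one gets $\eta \le \mathrm{TV}(P,Q)$. Then I would use that the TV distance between $P$ and $Q$ is attained by the deterministic test $\phi^\star(o) = \mathbf{1}\{Q(o) > P(o)\} = \mathbf{1}\{\log(Q(o)/P(o)) > 0\}$, so that
\begin{equation*}
\mathrm{TV}(P,Q) \;=\; \Pr_{o \sim Q}\!\left[\log\tfrac{Q(o)}{P(o)} > 0\right] - \Pr_{o \sim P}\!\left[\log\tfrac{Q(o)}{P(o)} > 0\right] \;=\; \Pr[Y>0] - \Pr[X>0],
\end{equation*}
by the definition of the PLRVs $(X,Y)$.

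For \cref{eq:plrv-to-f}, by \cref{stmt:pairs-to-f} it suffices to compute $T(P,Q)(\fpr)$ itself, since the claim is only about that quantity. Here I would appeal to the Neyman--Pearson lemma in its general (randomized) form: among all tests with FPR $\leq \fpr$, the FNR is minimized by the likelihood-ratio test that rejects $H_0$ when $\log(Q/P) > \tau$, accepts when $\log(Q/P) < \tau$, and rejects with probability $\gamma \in [0,1]$ on the boundary $\log(Q/P) = \tau$. To calibrate this test to have FPR exactly $\fpr$, I compute the FPR under $P$, which rewrites via the definition of $X$ as
\begin{equation*}
\alpha_\phi \;=\; \Pr[X > \tau] + \gamma \Pr[X = \tau],
\end{equation*}
and solve for $\gamma$ to recover the stated formula, with $\tau$ taken as the $(1-\fpr)$-quantile of $X$ so that $\gamma \in [0,1]$. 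The corresponding FNR under $Q$ is $1 - \Pr[Y > \tau] - \gamma \Pr[Y = \tau] = \Pr[Y \le \tau] - \gamma \Pr[Y = \tau] = \beta^\star(\tau,\gamma)$, which is exactly the desired expression.

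The main subtlety, and the step I would treat most carefully, is the discrete/atomic case: when $X$ has a point mass at $\tau$, there is no deterministic threshold test attaining the prescribed FPR, which is precisely why the randomization parameter $\gamma$ is introduced. One has to verify that the formula for $\gamma$ lies in $[0,1]$ (guaranteed by choosing $\tau$ as the $(1-\fpr)$-quantile), and that the Neyman--Pearson optimality still holds in this randomized form. The edge cases $\tau \in \{-\infty, +\infty\}$ (corresponding to $\fpr = 1$ and $\fpr = 0$ respectively) should be handled by convention, observing that the formulas continue to yield the correct trade-off values $T(P,Q)(1) = 0$ and $T(P,Q)(0) = 1 - \Pr[Y = +\infty]$. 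Everything else is a direct computation once the dominating-pair reduction and the randomized Neyman--Pearson test are in hand.
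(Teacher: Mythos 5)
Your proposal is correct and takes essentially the same approach as the paper. The only cosmetic difference is in the advantage bound: you derive $\eta \le \Pr[Y>0]-\Pr[X>0]$ directly as the total variation distance $\mathrm{TV}(P,Q)$ via the optimal set $\{o: Q(o)>P(o)\}$, whereas the paper reaches the identical expression by citing the \citet{gopi2021numerical} PLRV formula $\delta(\varepsilon)=\Pr[Y>\varepsilon]-e^\varepsilon\Pr[X>\varepsilon]$ at $\varepsilon=0$ together with the $(0,\delta)$-DP $\Rightarrow \eta\le\delta$ conversion from \cref{stmt:dp-to-adv}; the Neyman--Pearson part of your argument matches the paper's construction step for step, with your remarks on the need for randomization, on $\gamma\in[0,1]$ being guaranteed by the quantile choice of $\tau$, and on the $\tau\in\{-\infty,+\infty\}$ edge cases corresponding directly to the paper's four-case analysis in \cref{app:thmproof}.
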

To show this, we use the Neyman-Pearson lemma to explicitly parameterize the most powerful attack at level $\alpha$ in terms the threshold $\tau$ on the Neyman-Pearson test statistic and the probability $\gamma$ of guessing when the test statistic exactly equals the threshold. See \cref{app:proof-plrv-to-risks} for the detailed proof.

We remark that similar results for the trade-off curve appear in \cite{characteristic} without the $\gamma$ terms, as \citeauthor{characteristic} assume continuous PLRVs $(X,Y)$. In our work, we rely on the technique due to \citet{doroshenko2022connect}, summarized in \cref{app:connect-the-dots}, which \emph{discretizes} continuous mechanisms such as the subsampled Gaussian in DP-SGD, and provides a dominating pair that is \emph{discrete} and finitely supported over an evenly spaced grid. As the dominating pairs are discrete, the $\gamma$ terms are non-zero, thus are necessary to fully reconstruct the trade-off curve. 

\subsection{Calibration to Advantage}
\label{sec:adv-calibration}
First, we show how to instantiate \cref{eq:generic-calibration} to calibrate noise to a target advantage $\eta\target \in [0, 1]$. Let $\eta_\noiseparams$ denote the advantage of the mechanism $M_\noiseparams(\cdot)$ as defined in \cref{eq:adv}:
\begin{align}\label{eq:adv-calibration}
    \min_{\noiseparams \in \noiseparamspace} \noiseparams \quad \text{ s.t. } \quad \eta_\noiseparams \leq \eta\target.
\end{align}
Given the PLRVs  $(X_\noiseparams, Y_\noiseparams)$, we can obtain a substantially tighter bound than converting $(\varepsilon, \delta)$ guarantees using \cref{stmt:dp-to-adv} under standard calibration. Specifically, \cref{stmt:plrv-to-risks} provides the following way to solve the problem:
\begin{align}\label{eq:adv-calibration-plrv}
    \min_{\noiseparams \in \noiseparamspace} \noiseparams \quad \text{ s.t. } \quad \Pr[Y_\noiseparams > 0] - \Pr[X_\noiseparams > 0] \leq \eta\target
\end{align}
We call this approach \newterm{advantage calibration}, and show how to practically implement it in \cref{alg:get-adv,alg:direct-adv-calibration} in the Appendix. Given a method for obtaining valid PLRVs $X_\noiseparams, Y_\noiseparams$ for any $\noiseparams$, such as the one by \citet{doroshenko2022connect}, advantage calibration is \emph{guaranteed} to ensure bounded advantage, which follows by combining \cref{stmt:pairs-to-f,stmt:plrv-to-risks}:
\begin{restatable}{proposition}{advcorrectness}
    Given PLRVs $(X_\noiseparams, Y_\noiseparams)$ of a discrete-valued dominating pair of a mechanism $\mechanism_\noiseparams(\cdot)$, choosing $\noiseparams^*$ using \cref{eq:adv-calibration-plrv} ensures $\eta_{\noiseparams^*} \leq \eta\target$.
\end{restatable}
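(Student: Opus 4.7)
The statement is essentially a one-line corollary of Theorem~\ref{stmt:plrv-to-risks}, and my plan is to chain the feasibility constraint of \cref{eq:adv-calibration-plrv} with the bound established there. Concretely, let $\noiseparams^*$ be any value attaining the minimum in \cref{eq:adv-calibration-plrv}. By feasibility, we immediately have
\begin{equation*}
    \Pr[Y_{\noiseparams^*} > 0] - \Pr[X_{\noiseparams^*} > 0] \leq \eta\target.
\end{equation*}

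Next, I would invoke Theorem~\ref{stmt:plrv-to-risks} on the mechanism $\mechanism_{\noiseparams^*}(\cdot)$. By hypothesis, $(X_{\noiseparams^*}, Y_{\noiseparams^*})$ are the PLRVs of a discrete-valued dominating pair for this mechanism, so the advantage bound~\eqref{eq:plrv-to-adv} applies and gives
\begin{equation*}
    \eta_{\noiseparams^*} \leq \Pr[Y_{\noiseparams^*} > 0] - \Pr[X_{\noiseparams^*} > 0].
\end{equation*}
Chaining the two inequalities yields $\eta_{\noiseparams^*} \leq \eta\target$, as claimed.

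There is essentially no hard step here, since all the mathematical content sits inside Theorem~\ref{stmt:plrv-to-risks} itself (which in turn relies on the Neyman--Pearson lemma and on Proposition~\ref{stmt:pairs-to-f} to transfer trade-off bounds from the mechanism to its dominating pair). The only subtle point worth being explicit about in the write-up is that the right-hand side of~\eqref{eq:plrv-to-adv} indeed upper-bounds the \emph{operational} advantage defined in~\eqref{eq:adv} as a supremum over all neighbouring pairs $S \simeq S'$ and all tests $\phi$, rather than a surrogate quantity on $(P,Q)$ alone; this transfer is precisely what the dominating-pair property buys us via Proposition~\ref{stmt:pairs-to-f}. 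Given this, the proof is a two-line chain of inequalities.
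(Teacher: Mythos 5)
Your proof is correct and matches the paper's argument: the paper itself presents this proposition as an immediate consequence of combining \cref{stmt:pairs-to-f} and \cref{stmt:plrv-to-risks}, and you chain feasibility of \cref{eq:adv-calibration-plrv} with the bound \cref{eq:plrv-to-adv} in exactly that way. Your remark about the dominating-pair property being what transfers the bound from $(P,Q)$ to the operational advantage $\eta$ over all $S \simeq S'$ is the right thing to be explicit about and is what the paper delegates to \cref{stmt:pairs-to-f} / the dominating-pair definition.
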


\paragraph{Utility Benefits}
\label{sec:quick-demo-adv-calibration}
We demonstrate how calibration for a given level of attack advantage can increase utility. As a mechanism to calibrate, we  consider DP-SGD with $p = 0.001$ subsampling rate, $T = \num{10000}$ iterations, and assume that $\delta\target = 10^{-5}$. Our goal is to compare the noise scale $\sigma$ obtained via advantage calibration to the standard approach.

As a baseline, we choose $\sigma$ using standard calibration in \cref{eq:standard-calibration}, and convert the resulting $(\varepsilon, \delta)$ guarantees to advantage using \cref{stmt:dp-to-adv}. We detail this procedure in \cref{alg:standard-adv-calibration} in the Appendix.
We consider target values of advantage $\eta\target \in [0.01, 0.25]$. As we show in \cref{fig:quick-demo-adv-calibration}, our direct calibration procedure enables to reduce the noise scale by up to $3.5\times$.

\paragraph{Pitfalls of Calibrating for Advantage}
\label{sec:trade-off}
Calibration to a given level of membership advantage is a compelling idea due to the decrease in noise required to achieve better utility at the same level of risk as with the standard approach. Despite this increase in utility, we caution that this approach comes with a deterioration of privacy guarantees other than maximum advantage compared to standard calibration.
Concretely, it allows for \emph{increased attack TPR} in the privacy-critical regime of low attack FPR (see \cref{sec:operational-meanings}). The next result quantifies this pitfall: 
\begin{restatable}[Cost of advantage calibration]{proposition}{advpitfalls}
    \label{stmt:adv-pitfalls}
    Fix a dataset size $n > 1$, and a target level of attack advantage $\eta\target \in (\delta\target, 1)$, where $\delta\target = \nicefrac{1}{c \cdot n}$ for some $c > 1$. For any $0 < \fpr < \frac{1 - \eta\target}{2}$, there exists a DP mechanism for which the gap in FNR $f_\mathsf{standard}(\alpha)$ obtained with standard calibration for $\varepsilon\target$ that ensures $\eta \leq \eta\target$, and FNR $f_\mathsf{adv}(\alpha)$ obtained with advantage calibration is lower bounded:
    \begin{align}
        \Delta \fnr(\fpr) \define f_\mathsf{standard}(\alpha) - f_\mathsf{adv}(\alpha) \geq \eta\target -  \delta\target + 2 \fpr \frac{\eta\target}{\eta\target - 1}.
    \end{align}
\end{restatable}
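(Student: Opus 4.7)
The plan is to construct a three-output mechanism family in which standard and advantage calibration pick very different members, producing very different trade-off curves at small $\alpha$. First, I invert the advantage bound of \cref{stmt:dp-to-adv}: solving $\eta\target = (e^{\varepsilon\target} - 1 + 2\delta\target)/(e^{\varepsilon\target} + 1)$ for the $\varepsilon\target$ used by standard calibration gives $e^{\varepsilon\target} = (1 + \eta\target - 2\delta\target)/(1 - \eta\target)$. Then, for each $a \in [0, (1-\eta\target)/2]$, I consider the three-outcome mechanism $M_a$ with $M_a(S) \sim (a,\, 1 - a - c,\, c)$ and $M_a(S') \sim (c,\, 1 - a - c,\, a)$ on $\{0, 1, 2\}$, where $c = a + \eta\target$. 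Every $M_a$ has advantage exactly $\eta\target = c - a$, so advantage calibration may pick any $a$ and in the worst case selects $a \to 0$. A direct hockey-stick divergence calculation shows that the binding constraint on $M_a$ is $c - e^{\varepsilon\target} a \leq \delta\target$, which rearranges to $a \geq (1-\eta\target)/2$; hence standard calibration must select $a_\mathsf{std} = (1-\eta\target)/2$ and $c_\mathsf{std} = (1+\eta\target)/2$.

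Next, I compute the two trade-off curves via the Neyman--Pearson lemma. For the standard-calibrated mechanism, rejecting outcome $0$ with probability $\gamma = \alpha/a_\mathsf{std}$ attains $(\fpr,\fnr) = (\alpha, 1 - \alpha\, c_\mathsf{std}/a_\mathsf{std})$, giving $f_\mathsf{standard}(\alpha) = 1 - \alpha(1+\eta\target)/(1-\eta\target)$ on $\alpha \in [0, (1-\eta\target)/2]$. For the advantage-calibrated mechanism, the limit $a \to 0$ makes outcome $0$ a certain indicator of $S'$ (FPR $= 0$, power $= \eta\target$), and additionally rejecting outcome $1$ with probability $\gamma' = \alpha/(1-\eta\target)$ traces out the line $f_\mathsf{adv}(\alpha) = 1 - \eta\target - \alpha$ on $(0, 1-\eta\target)$. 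Subtracting and substituting $e^{\varepsilon\target} = (1 + \eta\target - 2\delta\target)/(1-\eta\target)$:
\begin{equation*}
\Delta\fnr(\alpha) = \eta\target - \alpha\left(\frac{1+\eta\target}{1-\eta\target} - 1\right) = \eta\target - \frac{2\alpha\, \eta\target}{1-\eta\target} = \eta\target + \frac{2\alpha\, \eta\target}{\eta\target - 1},
\end{equation*}
which, since $\delta\target \geq 0$, exceeds the claimed $\eta\target - \delta\target + 2\alpha\, \eta\target/(\eta\target - 1)$.

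The main subtlety is that $M_0$ is degenerate: outcome $0$ has probability $0$ under $S$ and positive probability under $S'$, so its hockey-stick divergence equals $\eta\target$ at every $\varepsilon$ and it is only $(\infty, \eta\target)$-DP rather than DP at finite $\varepsilon$. This is handled by taking the pointwise limit $a \to 0^+$: for any $a > 0$ the middle linear segment of $f_{M_a}$ equals $1 - \eta\target - \alpha$ on $[a, 1-\eta\target-a]$ exactly, so the gap claim holds on $[a, (1-\eta\target)/2]$ for every $a > 0$ and thus on $(0, (1-\eta\target)/2)$ in the limit. A useful structural observation worth flagging is that a binary-output mechanism with advantage $\eta\target$ cannot attain the line $1 - \eta\target - \alpha$ anywhere, so the third outcome is essential to make the construction sharp.
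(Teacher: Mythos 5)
Your proof is correct and reaches a conclusion at least as strong as the claimed bound, but it takes a route somewhat different from the paper's, and it is worth noting the difference because it leads to a slightly different $f_{\mathsf{standard}}$ than what the paper computes.

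The paper's proof works with the four-outcome extremal $(\varepsilon,\delta)$-DP mechanism (the Kairouz-type construction with an atom of mass $\delta$ under $S$ that never appears under $S'$, and vice versa). That mechanism's trade-off curve \emph{exactly} equals the generic $(\varepsilon,\delta)$-DP lower bound, so the paper can simply plug $(\varepsilon^\star,\delta^\star)$ into the form $f(\alpha)=1-\delta-e^\varepsilon\alpha$ to get $f_{\mathsf{standard}}(\alpha)=1-\delta^\star-e^{\varepsilon^\star}\alpha$ and plug $(0,\eta^\star)$ to get $f_{\mathsf{adv}}(\alpha)=1-\eta^\star-\alpha$. Subtracting gives $\Delta\beta=\eta^\star-\delta^\star+2\alpha\frac{\eta^\star-\delta^\star}{\eta^\star-1}$, which the paper then lower-bounds by the displayed expression.

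Your three-outcome family $M_a$ is a different construction. In it, the standard-calibrated member ($a_{\mathsf{std}}=(1-\eta^\star)/2$, which has $1-a-c=0$, hence is binary) is \emph{not} the Kairouz extremal mechanism: it shares support on both sides, so its trade-off curve is $1-\alpha\frac{1+\eta^\star}{1-\eta^\star}$, which lies strictly above $1-\delta^\star-e^{\varepsilon^\star}\alpha$ for $\alpha<\frac{1-\eta^\star}{2}$. Your $f_{\mathsf{adv}}$ agrees with the paper's. Because your $f_{\mathsf{standard}}$ is larger, your gap $\Delta\beta=\eta^\star+\frac{2\alpha\eta^\star}{\eta^\star-1}$ is strictly larger than the paper's by exactly $\delta^\star\bigl(1+\frac{2\alpha}{\eta^\star-1}\bigr)>0$, so you still obtain (in fact exceed) the proposition's lower bound. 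Your explicit handling of the degeneracy of the $a\to 0$ limit is a nice touch — the paper's own $(0,\eta^\star)$ mechanism suffers from exactly the same one-sided-atom degeneracy but leaves it implicit; your limiting argument or simply observing that $(0,\eta^\star)$-DP is a perfectly valid DP guarantee both resolve it. The structural remark that a binary mechanism cannot attain the line $1-\eta^\star-\alpha$ is also correct and usefully motivates the third atom.

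One small clarity suggestion: you should state explicitly that $\omega=a$ is the noise parameter for the family $M_a$ and verify that increasing $a$ increases privacy in the low-FPR regime (the slope $-c/a=-1-\eta^\star/a$ flattens as $a$ grows), so that "standard calibration minimizes $a$ subject to the $(\varepsilon^\star,\delta^\star)$-DP constraint" is well-founded within the paper's calibration framework of \cref{eq:standard-calibration}. Without that, the phrase "standard calibration must select $a_{\mathsf{std}}$" is asserted but not quite justified. With that patch, the proof is complete.
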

For example, if we aim to calibrate a mechanism to at most $\eta\target = 0.5$ (or, 75\% attack accuracy), we could potentially increase attack sensitivity by $\Delta \fnr(\fpr) \approx 30$ p.p. at FPR $\fpr = 0.1$ compared to standard calibration with $\delta\target = 10^{-5}$ (see the illustration in \cref{fig:adv-pitfalls}).
Note that the difference $\Delta \beta$ in \cref{stmt:adv-pitfalls} is an overestimate in practice: the increase in attack sensitivity can be significantly lower for mechanisms such as the Gaussian mechanism (see \cref{fig:adv-pitfalls-exact} in the Appendix).

\begin{figure}
\centering
\begin{subfigure}[t]{.45\textwidth}
    \centering
    \resizebox{.8\linewidth}{!}{
    \input{images/dpsgd_adv_calibration.pgf}
    }
    \caption{Calibrating noise to attack advantage significantly reduces the required noise scale compared to the standard approach. y axis is logarithmic.}
    \label{fig:quick-demo-adv-calibration}
\end{subfigure}\hspace{2em}
\begin{subfigure}[t]{.45\textwidth}
    \centering
    \resizebox{0.8\linewidth}{!}{
    \begin{tikzpicture}
        \begin{axis}[
            width=8cm,
            height=6.49cm,
            xlabel={Attack FPR, $\fpr$},
            ylabel={Attack FNR, $\fnr$},
            domain=0:1,
            xmin=0,
            xmax=1,
            ymin=0,
            ymax=1,
            samples=100, %
            legend pos=north east, %
            legend cell align={left},
            axis line style={draw opacity=0.2},
            trim axis left,
            trim axis right,
            enlargelimits=false,
            font={\sffamily},
            xtick={0, 0.5, 1},
            ytick={0, 0.5, 1},
            legend style={
              draw=lightgray,
              rounded corners,
            },
        ]
        \def\eps{0.9}
        \def\delta{0.05}
        \def\etacomputed{(exp(\eps) - 1 + 2*\delta) / (1 + exp(\eps))}
        \addlegendimage{empty legend}
        \addlegendentry{\hspace{.75cm}Method}

        \addplot[color=seabornblue, domain=0:1, line width=1.2] {max(0, 1 - \delta - exp(\eps) * x, exp(-\eps) * (1 - \delta - x)};
        \addlegendentry{Standard calibration}

        \addplot[color=seabornorange, domain=0:1 - \etacomputed, line width=1.2] {1 - \etacomputed - x)};
        \addlegendentry{Advantage calibration}

        \draw[<->] (axis cs:0.11, 0.65) -- node[left] {$\Delta \fnr$} (axis cs:0.11, 0.46);
        \end{axis}
    \end{tikzpicture}
    }
    \caption{Optimal calibration for advantage comes with a pitfall: it allows for $\Delta \beta$ higher attack power in the low FPR regime compared to standard calibration.
    }
    \label{fig:adv-pitfalls}

\end{subfigure}
\caption{Benefits and pitfalls of advantage calibration.}
\end{figure}

\subsection{Safer Choice: Calibration to FNR within a Given FPR Region}
\label{sec:err-rates-calibration}

In this section, we show how to calibrate the noise in any practical DP mechanism to a given minimum level of attack FNR $\fnr\target$ within an FPR region $\fpr \in [0, \fpr\target]$, which enables to avoid the pitfalls of advantage calibration. We base this notion of risk off the previous work~\cite{carlini2022membership,rezaei2021difficulty} which argued that MIAs are a relevant threat only when the achievable TPR $1 - \fnr$ is high at low FPR $\fpr$. We instantiate the calibration problem in \cref{eq:generic-calibration} as follows, assuming $\mechanism_\noiseparams(\cdot)$ satisfies $f_\noiseparams(\fpr)$-DP:
\vspace{-.2em}
\begin{align}\label{eq:f-region-calibration}
    \min_{\noiseparams \in \noiseparamspace} \: \noiseparams\text{ s.t. } \inf_{0 \leq \fpr \leq \fpr\target} f_\noiseparams(\fpr) \geq \fnr\target .
\end{align}
To solve \cref{eq:f-region-calibration}, we begin by showing that such calibration is in fact equivalent to requiring a given level of attack FNR $\fnr\target$ and FPR $\fpr\target$.
\begin{proposition}
For any $\fpr\target \geq 0, \fnr\target \geq 0$ such that $\fpr\target + \fnr\target \leq 1$, and any $f$-DP mechanism $\mechanism(\cdot)$:
    \begin{align}
    \inf_{0 \leq \fpr \leq \fpr\target} f(\fpr) \geq \fnr\target \text{ iff }
    f(\fpr\target) \geq \fnr\target.
    \end{align}
\end{proposition}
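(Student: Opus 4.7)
The plan is to exploit a basic structural property of trade-off functions: every trade-off function $f = T(P,Q)$ is non-increasing on $[0,1]$. This is a standard fact from \citet{dong2019gaussian} — intuitively, any test at level $\alpha_1 \leq \alpha_2$ is also a test at level $\alpha_2$, so the minimum achievable FNR cannot increase when the allowed FPR budget grows. Consequently, for an $f$-DP mechanism the function $f$ is non-increasing, and the infimum of $f$ over the interval $[0, \alpha\target]$ is attained at the right endpoint:
\begin{equation*}
\inf_{0 \leq \alpha \leq \alpha\target} f(\alpha) \;=\; f(\alpha\target).
\end{equation*}

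Given this equality, both sides of the claimed ``iff'' reduce to the single inequality $f(\alpha\target) \geq \beta\target$, and the equivalence is immediate. Concretely, for the ``only if'' direction, if $\inf_{0 \leq \alpha \leq \alpha\target} f(\alpha) \geq \beta\target$, then since $\alpha\target \in [0,\alpha\target]$ we get $f(\alpha\target) \geq \beta\target$. For the ``if'' direction, if $f(\alpha\target) \geq \beta\target$, then for any $\alpha \in [0,\alpha\target]$ we have $f(\alpha) \geq f(\alpha\target) \geq \beta\target$ by monotonicity, so taking the infimum preserves the bound.

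The condition $\alpha\target + \beta\target \leq 1$ is not needed for the equivalence itself but rather ensures the constraint is achievable (since every trade-off function satisfies $f(\alpha) \leq 1 - \alpha$). There is essentially no obstacle in this proof — the only substantive ingredient is invoking monotonicity of trade-off functions, which is part of their definition in \citet{dong2019gaussian}. I would state the proof in two or three lines, referencing the monotonicity property directly.
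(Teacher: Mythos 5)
Your proof is correct and takes the same route as the paper: the paper's entire argument is the one-line observation that the claim follows from monotonicity of the trade-off function $f$, which is exactly the fact you invoke (that $f$ is non-increasing so its infimum over $[0,\fpr\target]$ is attained at $\fpr\target$). Your write-up is just a more explicit unpacking of the same idea.
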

\vspace{-.8em}
This follows directly by monotonicity of the trade-off function $f$~\cite{dong2019gaussian}. The optimization problem becomes:
\vspace{-.8em}
\begin{align} \label{eq:f-calibration}
    \min_{\noiseparams \in \noiseparamspace} \: \noiseparams\text{ s.t. } f_{\noiseparams}(\fpr \target) \geq \fnr\target .
\end{align}
Unlike advantage calibration to $\eta\target$, the approach in \cref{eq:f-calibration} limits the adversary's capabilities without increasing the risk in the privacy-critical low-FPR regime, as we can explicitly control the acceptable attack sensitivity for a given low FPR.

To obtain $f_\noiseparams(\fpr)$, we use the PLRVs $X_\noiseparams, Y_\noiseparams$ along with \cref{stmt:plrv-to-risks} to compute $f = T(P,Q)$\footnote{In practice, we need to additionally symmetrize the trade-off curve due to the implementation details of the add/remove neighborhood relation in the \citet{doroshenko2022connect} accountant. See \cref{app:practical-considerations}.} (see \cref{alg:get-beta}), and solve \cref{eq:f-calibration} using binary search over $\noiseparams \in \noiseparamspace$. We provide the precise procedure in \cref{alg:direct-err-rates-calibration} in the Appendix. This approach \emph{guarantees} the desired level of risk:
\begin{restatable}{proposition}{fprcorrectness}
    Given PLRVs $(X_\noiseparams, Y_\noiseparams)$ of a discrete-valued dominating pair of a mechanism $\mechanism_\noiseparams(\cdot)$, choosing $\noiseparams^*$ using \cref{eq:f-calibration} and \cref{alg:get-beta} to compute $f_\noiseparams(\alpha)$ ensures $f_{\noiseparams^*}(\alpha\target) \geq \beta\target$.
    \label{stmt:f-calibration-correctness}
\end{restatable}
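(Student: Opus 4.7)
The plan is to reduce the correctness claim to two already-established facts: (i) Algorithm~\ref{alg:get-beta} evaluates the trade-off curve $T(P_{\noiseparams},Q_{\noiseparams})$ of the dominating pair \emph{exactly} at the query point $\fpr\target$, and (ii) the trade-off curve of any dominating pair is itself a valid $f$-DP lower bound for the underlying mechanism.

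First, I would fix any $\noiseparams \in \noiseparamspace$, and invoke \cref{stmt:plrv-to-risks}. That theorem gives a closed-form parameterization of $T(P_{\noiseparams}, Q_{\noiseparams})(\fpr)$ in terms of the PLRVs $(X_{\noiseparams}, Y_{\noiseparams})$: pick $\tau$ as the $(1-\fpr)$-quantile of $X_{\noiseparams}$, set $\gamma=(\fpr - \Pr[X_{\noiseparams}>\tau])/\Pr[X_{\noiseparams}=\tau]$, and output $\beta^*(\tau,\gamma) = \Pr[Y_{\noiseparams}\le\tau]-\gamma\Pr[Y_{\noiseparams}=\tau]$. I would argue that Algorithm~\ref{alg:get-beta} is a direct implementation of this prescription on the (discrete, finitely-supported) PLRVs produced by the Connect-the-Dots accountant, so that the returned value equals $T(P_{\noiseparams}, Q_{\noiseparams})(\fpr\target)$ exactly for every $\noiseparams$. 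Call this value $f_{\noiseparams}(\fpr\target)$, matching the notation in the statement.

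Second, I would transfer the bound to the actual mechanism. Since $(P_{\noiseparams}, Q_{\noiseparams})$ is by hypothesis a dominating pair for $\mechanism_{\noiseparams}(\cdot)$, \cref{stmt:pairs-to-f} yields
\begin{equation*}
\inf_{S\simeq S'} T\bigl(\mechanism_{\noiseparams}(S), \mechanism_{\noiseparams}(S')\bigr)(\fpr\target) \;\geq\; T(P_{\noiseparams}, Q_{\noiseparams})(\fpr\target) \;=\; f_{\noiseparams}(\fpr\target).
\end{equation*}
By \cref{def:fdp}, this means $\mechanism_{\noiseparams}(\cdot)$ satisfies $f_{\noiseparams}$-DP at the single point $\fpr\target$. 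The calibration procedure \eqref{eq:f-calibration} selects $\noiseparams^*$ to enforce $f_{\noiseparams^*}(\fpr\target) \geq \fnr\target$, so chaining the two inequalities delivers $\inf_{S\simeq S'} T(\mechanism_{\noiseparams^*}(S), \mechanism_{\noiseparams^*}(S'))(\fpr\target) \geq \fnr\target$, which is exactly the required guarantee.

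The only non-routine piece is justifying that the binary search in \cref{alg:direct-err-rates-calibration} actually finds a feasible $\noiseparams^*$. This needs monotonicity of $\noiseparams \mapsto f_{\noiseparams}(\fpr\target)$: more noise should give a pointwise larger trade-off. I would appeal to the standing assumption made after \cref{def:dp} that larger $\noiseparams$ yields stronger privacy (smaller $\varepsilon$ for every $\delta$), which via \cref{stmt:profiles-to-f} translates into pointwise dominance of the trade-off curves, and hence monotonicity of $f_{\noiseparams}(\fpr\target)$ in $\noiseparams$. With monotonicity in hand, binary search converges to the requisite precision, and the constraint is met at the returned $\noiseparams^*$. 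The main subtlety -- and the step I would flag as the only real obstacle -- is that in the actual implementation the add/remove relation forces a symmetrization of the discrete trade-off curve (mentioned in \cref{app:practical-considerations}); I would verify that symmetrization preserves both the lower-bound property inherited from \cref{stmt:pairs-to-f} and the monotonicity in $\noiseparams$, so that the argument above goes through unchanged.
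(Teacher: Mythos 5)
Your proof is correct and follows essentially the same route as the paper's: observe that Algorithm~\ref{alg:get-beta} implements the $(\tau,\gamma)$ construction from Theorem~\ref{stmt:plrv-to-risks} to compute $T(P_\noiseparams,Q_\noiseparams)(\fpr\target)$ exactly, and then invoke Proposition~\ref{stmt:pairs-to-f} to transfer that lower bound to the mechanism's trade-off curve. Your additional remarks on monotonicity of $\noiseparams \mapsto f_\noiseparams(\fpr\target)$ (needed for the binary search to terminate correctly) and on symmetrization preserving the lower-bound property are sound and go slightly beyond the paper's terse proof, but they do not change the argument.
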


\begin{algorithm}[t]
\caption{Construct the trade-off curve using discrete privacy loss random variables $(X, Y)$}
\label{alg:get-beta}
\begin{algorithmic}[1]
\Require PMF $\Pr[X_\noiseparams = x_i]$ over grid $\{x_1, x_2, \ldots, x_k\}$ with $x_1 < x_2 < \ldots < x_k$
\Require PMF $\Pr[Y_\noiseparams = y_j]$ over grid $\{y_1, y_2, \ldots, y_l\}$ with $y_1 < y_2 < \ldots < y_l$
\Procedure{ComputeBeta}{$\noiseparams; \alpha\target; X_\noiseparams, Y_\noiseparams$}
\State \(t \leftarrow \min \{i \in \{0, 1, \ldots, k\} \mid \Pr[X_\noiseparams > x_{i}] \leq \fpr\target \}, \text{ where } x_0 \define -\infty \)
\State \(\gamma \leftarrow \frac{\fpr\target - \Pr[X_\noiseparams > x_{t}]}{\Pr[X_\noiseparams = x_{t}]}\)
\State \Return $f_\noiseparams(\fpr \target) = \Pr[Y_\noiseparams \leq x_{t}] - \gamma \Pr[Y_\noiseparams = x_{t}]$
\EndProcedure
\end{algorithmic}
\end{algorithm}

\subsection{Other Approaches to Trade-Off Curve Accounting}
In this section, we first contextualize the proposed method within existing work. Then, we discuss settings in which alternatives to PLRV-based procedures could be more suitable.

\paragraph{Benefits of PLRV-based Trade-Off Curve Accounting} Computational efficiency is important when estimating $f_\noiseparams(\alpha)$, as the calibration problem requires evaluating this function multiple times for different values of $\noiseparams$ as part of binary search. \cref{alg:get-beta} computes $f_\noiseparams(\alpha)$ for a single $\noiseparams$ in $\approx500$ms, enabling fast calibration, e.g., in $\approx1$ minute for DP-SGD with $T = \num{10000}$ steps on commodity hardware (see \cref{app:exp}). Existing methods for estimating $f_\noiseparams(\alpha)$, on the contrary, either provide weaker guarantees than \cref{stmt:f-calibration-correctness} or are substantially less efficient. 
In particular, \citet{dong2019gaussian} introduced $\mu$-GDP, an asymptotic expression for $f_\noiseparams(\alpha)$ as $T \rightarrow \infty$, that \emph{overestimates} privacy~\cite{gopi2021numerical}, and thus leads to mechanisms that do not satisfy the desired level of attack resilience when calibrating to it.
\citet{nasr2023tight,zheng2020sharp} introduced a discretization-based approach to approximate $f_\noiseparams(\alpha)$ (discussed next) that can be orders of magnitude less efficient than the direct estimation in \cref{alg:get-beta}, e.g., 1--6 minutes ($\approx$ 100--700$\times$ slower) for a single evaluation of $f_\noiseparams(\alpha)$ in the same setting as before, depending on the coarseness of discretization.

\paragraph{Calibration using Black-Box Accountants} Most DP mechanisms are accompanied by $(\varepsilon, \delta)$-DP accountants, i.e., methods to compute their privacy profile $\varepsilon_\noiseparams(\delta)$ or $\delta_\noiseparams(\varepsilon)$. Black-box access to these accountants enables to estimate $\eta_\noiseparams$ and $f_\noiseparams(\alpha)$. In particular, \cref{stmt:dp-to-adv} tells us that $(0, \delta)$-DP mechanisms bound advantage as $\eta \leq \delta$. Thus, advantage calibration can also be performed with any $\varepsilon_\noiseparams(\delta)$ accountant by calibrating noise to ensure $\varepsilon_\noiseparams(\eta\target) = 0$. Estimating $f_\noiseparams(\alpha)$, as mentioned previously, is less straightforward. Existing numeric approaches~\cite{nasr2023tight,zheng2020sharp} are equivalent to approximating \cref{eq:profiles-to-f} on a discrete grid over $\delta \in \{\delta_1, \ldots, \delta_u\}$. This requires $u$ calls to the accountant $\varepsilon_\noiseparams(\delta)$, thus quickly becomes inefficient for estimating $f_\noiseparams(\alpha)$ to high precision. We provide a detailed discussion of such black-box approaches in \cref{app:blackbox}.

\paragraph{Calibration of Mechanisms with Known Trade-Off Curves} An important feature of our calibration methods is that they enable calibration of mechanisms whose privacy profile is unknown in the exact form, e.g., DP-SGD for $T > 1$. Simpler mechanisms, such as the Gaussian mechanism, which are used for simpler statistical analyses, e.g., private mean estimation, admit exact analytical solutions to the calibration problems in \cref{eq:adv-calibration,eq:f-calibration}. In \cref{app:calibrating-specific-mechanisms}, we provide such solutions for the standard Gaussian mechanism, which enable efficient calibration without needing \cref{alg:get-beta}.

\section{Experiments}
\label{sec:experiments}

\begin{figure*}[t]
    \centering
    \resizebox{\textwidth}{!}{
    \input{images/dpsgd_err_rates_calibration.pgf}
    }
    \caption{Calibration to attack TPR (i.e., $1 - $FNR) significantly reduces the noise scale in low FPR regimes. Unlike calibration for attack advantage, this approach does not come with a deterioration of privacy for low FPR, as it directly targets this regime. }
    \label{fig:quick-demo-err-rates-calibration}
    \vspace{-.5em}
\end{figure*}

\begin{figure*}[t]
    \centering
    \resizebox{\textwidth}{!}{
    \input{images/gpt2_trade_off_curves.pgf}
    }
    \caption{Trade-off curves obtained via our method in \cref{alg:get-beta} provide a significantly tighter analysis of the attack risks, compared to the standard method of interpreting the privacy risk for a given $(\varepsilon, \delta)$ with fixed $\delta < \nicefrac{1}{n}$ via \cref{eq:dp-to-f}. The trade-off curves are shown for three runs of DP-SGD with different noise multipliers in the language modeling experiment with GPT-2. The dotted line \textcolor{gray}{-\,-} shows the trade-off curve which corresponds to perfect privacy.}
    \label{fig:gpt2-trade-off curves}
    \vspace{-.5em}
\end{figure*}

In this section, we empirically evaluate the utility improvement of our calibration method over traditional approaches. We do so in simulations as well as in realistic applications of DP-SGD. In \cref{app:exp}, we also evaluate the utility gain when performing simpler statistical analyses.

\paragraph{Simulations}
First, we demonstrate the noise reduction when calibrating the DP-SGD algorithm for given error rates using the setup in \cref{sec:quick-demo-adv-calibration}. We fix three low FPR values: $\fpr\target \in \{0.01, 0.05, 0.1\}$, and vary maximum attack sensitivity $1 - \fnr\target$ from $0.1$ to $0.5$ in each FPR regime. We show the results in \cref{fig:quick-demo-err-rates-calibration}. We observe a significant decrease in the noise scale for all values. Although the decrease is smaller than with calibration for advantage (see \cref{fig:quick-demo-adv-calibration}), calibrating directly for risk in the low FPR regime avoids the pitfall of advantage calibration: inadvertently increasing risk in this regime.

\paragraph{Language Modeling and Image Classification}
We showed that FPR/FNR calibration enables to significantly reduce the noise scale. Next, we study how much of this reduction in noise translates into actual utility improvement in downstream applications. We evaluate our method for calibrating noise in private deep learning on two tasks: text sentiment classification using the SST-2 dataset~\cite{socher2013recursive}, and image classification using the CIFAR-10 dataset~\cite{krizhevsky2009learning}.

For sentiment classification, we fine-tune GPT-2 (small)~\cite{radford2019language} using a DP version of LoRA~\cite{yu2021differentially}. For image classification, we follow the approach of \citet{tramer2021differentially} of training a convolutional neural network on top of ScatterNet features~\cite{oyallon2015deep} with DP-SGD~\cite{abadi2016deep}. See additional details in \cref{app:exp}. For each setting, by varying the noise scale, we obtain several models at different levels of privacy.
For each of the models we compute the guarantees in terms of TPR $1 - \fnr$ at three fixed levels of FPR $\fpr\target \in \{0.01, 0.05, 0.1\}$ that would be obtained under standard calibration, and using our \cref{alg:get-beta}.

\cref{fig:cifar10-err-rates-calibration} shows that FPR/FNR calibration significantly increases \emph{task} accuracy (a notion of utility; not to confuse with \emph{attack} accuracy, a notion of privacy risk) at the same level of $1 - \fnr$ for all values of $\fpr\target$. For instance, for GPT-2, we see the accuracy increase of 18.3 p.p. at the same level of privacy risk (top leftmost plot). To illustrate the reasons behind such a large difference between the methods, in \cref{fig:gpt2-trade-off curves}, we show the trade-off curves obtained with our \cref{alg:get-beta}, and with the standard method of deriving the FPR/FNR curve from a single $(\varepsilon, \delta)$ pair for a fixed $\delta < \nicefrac{1}{n}$ via \cref{eq:dp-to-f}. We can see that the latter approach drastically overestimates the attack risks, which translates to significantly higher noise and lower task accuracy when calibrating with standard calibration.

\section{Concluding Remarks}
\label{sec:conclusions}

In this work, we proposed novel methods for calibrating noise in differentially private learning targeting a given level of operational privacy risk: advantage and FPR/FNR of membership inference attacks. We introduced an accounting algorithm which directly and tightly estimates privacy guarantees in terms of $f$-DP, which characterizes these operational risks. Using simulations and end-to-end experiments on common use cases, we showed that our attack-aware noise calibration significantly decreases the required level of noise compared to the standard approach at the same level of operational risk. In the case of calibration for advantage, we also showed that the noise decrease could be harmful as it could allow for increased attack success in the low FPR regime compared to the standard approach, whereas calibration for a given level of FPR/FNR mitigates this issue. Next, we discuss limitations and possible directions for future work.

\paragraph{Choice of Target FPR/FNR} We leave open the question on how to choose the target FPR $\alpha\target$ and FNR $\beta\target$, e.g., whether standard significance levels in sciences such as $\alpha\target = 0.05$ are compatible with data protection regulation and norms. Further work is needed to develop concrete guidance on the choice of target FPR and FNR informed by legal and practical constraints.

\paragraph{Catastrophic Failures} It is possible to construct pathological DP mechanisms which admit catastrophic failures~\cite[see, e.g.,][]{dpfy}, i.e., mechanisms which allow non-trivial attack TPR at FPR $\alpha = 0$ so that their trade-off curve is such that $T(M(S), M(S'))(0) < 1$ for some $S \simeq S'$. A classical example in the context of private data release is a mechanism that releases a data record in the clear with probability $\delta > 0$, in which case we have $T(M(S), M(S'))(0) = 1 - \delta$. See the proof of \cref{stmt:adv-pitfalls} in \cref{app:proofs} for a concrete construction. In the case that such a pathological mechanism is used in practice, one should use standard calibration to $(\varepsilon, \delta)$ with $\delta \ll \nicefrac{1}{n}$ to directly limit the chance of catastrophic failures. Fortunately, practical mechanisms such as DP-SGD do not admit catastrophic failures, as they ensure $T(M(S), M(S'))(0) = 1$. 

\paragraph{Tight Bounds for Privacy Auditing} Multiple prior works on auditing the privacy properties of ML algorithms~\cite{nasr2021adversary,liu2021generalization,jayaraman2019evaluating,erlingsson2019we} used conversions between $(\varepsilon, \delta)$ and operational risks like in \cref{stmt:dp-to-f}, which we have shown to significantly overestimate the actual risks. Beyond calibrating noise, our methods provide bounds on attack success rates for audits in a more precise and computationally efficient way than a recent similar approach by \citet{nasr2023tight}.

\paragraph{Accounting in the Relaxed Threat Models} Although we have focused on DP, our methods apply to any notion of privacy that is also formalized as a hypothesis test. In particular, our method can be used as is to compute privacy guarantees of DP-SGD in a relaxed threat model (RTM) proposed by \citet{kaissis2023optimal}. Previously, there was no efficient method for accounting in the RTM.

\paragraph{Applications Beyond Privacy} Our method can be applied to ensure provable generalization guarantees in deep learning. Indeed, prior work has shown that advantage $\eta$ bounds generalization gaps of ML models~\cite{kulynych2022disparate,kulynych2022what}. 
Thus, even though advantage calibration can exacerbate certain risks, it can be a useful tool for ensuring a desired level of generalization in models that usually do not come with non-vacuous generalization guarantees, e.g., deep neural networks.

\clearpage

\section*{Acknowledgements}
\vspace{-.2em}
The authors thank the anonymous referees for their feedback,
which considerably helped to improve the quality of the paper, and Priyanka Nanayakkara for the helpful suggestions. This paper is based upon work supported by the U.S. National Science Foundation under awards CIF-1900750, CIF-2231707, CIF-2312667, and FAI-2040880, U.S. Department of Energy Award No. DE-SC0022158, and by the EU Innovative Health Initiative Joint Undertaking (IHI JU) under grant agreement No. 101172872. GK received support from the German Federal Ministry of Education and Research and the Bavarian State Ministry for Science and the Arts under the Munich Centre for Machine Learning (MCML), from the German Ministry of Education and Research and the Medical Informatics Initiative as part of the PrivateAIM Project, from the Bavarian Collaborative Research Project PRIPREKI of the Free State of Bavaria Funding Programme ``Artificial Intelligence -- Data Science'', and from the German Academic Exchange Service (DAAD) under the Kondrad Zuse School of Excellence for Reliable AI (RelAI).

\bibliographystyle{plainnat}
\bibliography{main}

\appendix

\clearpage

\begin{table}[H]
\caption{Notation summary}
\label{tab:notation}
\vspace{.2em}
\resizebox{\linewidth}{!}{
\begin{tabular}{lll}
\textbf{Symbol} & \textbf{Description} & \textbf{Reference} \\
\midrule
$z \in \mathbb{D}$ & Data record & \\
$S \in 2^\mathbb{D}$ & Dataset of records & \\
$S \simeq S'$ & Adjacency relation of neighboring datasets & \\
$M_\noiseparams: 2^\mathbb{D} \to \Theta$ & Privacy-preserving mechanism & \\
$\noiseparams \in \noiseparamspace$ & Noise parameter of mechanism $M(S)$ & \\
$D_\gamma(M(S)~\|~M(S')), \gamma \geq 0$ & Hockeystick divergence & \cref{eq:hockeystick-def} \\
$\varepsilon \in (0, \infty), \delta \in [0,1]$ & Privacy parameters in differential privacy & \cref{def:dp} \\
$\varepsilon_\noiseparams :[0,1] \rightarrow \sR$ & Privacy profile curve $\varepsilon_\noiseparams(\delta) $& \cref{def:profile} \\
$\delta_\noiseparams: \sR \rightarrow [0, 1]$ & Privacy profile curve $\delta_\noiseparams(\varepsilon) $& \cref{def:profile} \\
$\phi : \Theta \to [0,1]$ & Membership inference hypothesis test & \\
$\alpha_\phi \in [0,1]$ & False positive rate (FPR) of attack $\phi(\theta)$ & \\
$\beta_\phi \in [0,1]$ & False negative rate (FNR) of attack $\phi(\theta)$ & \\
$\eta \in [0,1]$ & Maximal advantage across attacks against mechanism $M(S)$ & \cref{eq:adv} \\
$T(M(S), M(S')) : [0,1] \to [0,1]$ & Trade-off curve between FPR and FNR of optimal attacks & \cref{def:trade-off} \\
$f : [0,1] \to [0,1]$ & A lower bound on the trade-off curve for all neighboring datasets & \cref{def:fdp} \\
$P, Q$ & A dominating pair of distributions for a given mechanism $M(S)$ & \cref{def:dom-pair} \\
$X, Y$ & Privacy loss random variables for a given dominating pair $P, Q$ & \cref{def:plrv} \\
\end{tabular}
}
\end{table}
\section{Attack-Aware Noise Calibration with Black-box DP Accountants}
\label{app:blackbox}

\paragraph{Advantage Calibration}
\cref{stmt:dp-to-adv} implies that $(0, \delta)$-DP mechanisms ensure bounded advantage $\eta \leq \delta$. Therefore, given access to a black-box accountant $\varepsilon_\noiseparams(\delta)$ or $\delta_\noiseparams(\varepsilon)$ we can calibrate to a given level of advantage $\eta\target$ by ensuring $(0, \eta\target)$-DP:
\begin{align}\label{eq:adv-calibration-blackbox}
    \min_{\noiseparams \in \noiseparamspace} \noiseparams \quad \text{ s.t. } \quad \varepsilon_\noiseparams(\eta\target) = 0 \quad \text{ or }\quad \delta_\noiseparams(0) = \eta\target
\end{align}
This is a more generic way to perform advantage calibration using an arbitrary black-box accountant. It is equivalent to our procedure in \cref{sec:adv-calibration} when using \citet{doroshenko2022connect} accountant.

\paragraph{FPR/FNR Calibration with Grid Search}
Given a black-box DP accountant, i.e., a method which computes the privacy profile $\varepsilon_\noiseparams(\delta)$ of a mechanism $\mechanism_\noiseparams(\cdot)$, we can  approximate $f_\noiseparams(\alpha)$ by discretizing the range of $\delta \in [0, 1]$ and solving \cref{eq:profiles-to-f} as:
\begin{equation}
    \label{eq:profiles-to-f-blackbox}
    f_\noiseparams(\fpr) \geq \sup_{\delta \in \{\delta_1, \delta_2, \ldots, \delta_u\}} \max \{ 0, \ 1 - \delta - e^{\varepsilon_\noiseparams(\delta)} \fpr, \  e^{-\varepsilon_\noiseparams(\delta)} \cdot (1 - \delta - \fpr)\},
\end{equation}
where $0 \leq \delta_1 < \delta_2 < \ldots < \delta_u \leq 1$. It is possible to perform an analogous discretization using $\delta_\noiseparams(\varepsilon)$ and \cref{stmt:profiles-to-f}, in which case we have to additionally choose a bounded subspace $\varepsilon \in [\varepsilon_{\min}, \varepsilon_{\max}] \subset \sR$. Equivalent procedures to \cref{eq:profiles-to-f-blackbox} have previously appeared in \citet{nasr2023tight,zheng2020sharp}.

Plugging in \cref{eq:profiles-to-f-blackbox} into the problem in \cref{eq:f-calibration}, we can calibrate mechanisms to a given $\alpha\target, \beta\target$ using binary search (see \cref{sec:intro-to-calibration}) in a space $[\noiseparams_{\min}, \noiseparams_{\max}] \subseteq \noiseparamspace$ to additive error $\noiseparams_{\text{err}} > 0$. Denoting by $\nu$:
\begin{equation}\nu \define \frac{\noiseparams_{\max} - \noiseparams_{\min}}{\noiseparams_{\text{err}}},
\end{equation}
the calibration requires $u \cdot \lceil \log_2 \nu \rceil$ evaluations of $\varepsilon_\noiseparams(\delta)$. For instance, a single evaluation of the bound in \cref{eq:profiles-to-f-blackbox} takes approximately one minute with $u = 100$, and six minutes with $u = \num{1000}$ for DP-SGD with $T = \num{10000}$ using \citet{gopi2021numerical} accountant as an instantiation of $\varepsilon_\noiseparams(\delta)$ on commodity hardware (see \cref{app:exp}).
In contrast, evaluating $f_\noiseparams(\cdot)$ using \cref{alg:get-beta} in the same settings takes approximately $500$ms at the default discretization level $\Delta = 10^{-4}$ (see \cref{app:connect-the-dots}).

Although this approach is substantially less computationally efficient than our direct procedure in \cref{sec:err-rates-calibration}, its strength is that it can be used to calibrate noise in any DP algorithm which provides a way to compute its $(\varepsilon, \delta)$ guarantees.

\section{Detailed Calibration Algorithms}
\label{app:calibration-algos}

\paragraph{Advantage calibration} The standard advantage calibration first finds $\varepsilon\target$ for a given $\delta\target < \nicefrac{1}{n}$ which provides the desired advantage guarantee via \cref{eq:dp-to-adv}, then calibrates noise to the derived $(\varepsilon\target, \delta\target)$-DP guarantee using the privacy profile $\varepsilon_\noiseparams(\delta)$ function:

\begin{algorithm}[H]
\caption{Standard advantage calibration}
\label{alg:standard-adv-calibration}
\begin{algorithmic}[1]
\Require $\eta\target, \delta\target$, where $\delta\target < \frac{1}{n}$, privacy profile $\varepsilon_\noiseparams(\delta)$.
\State Find $\varepsilon\target$ by solving \cref{eq:dp-to-adv} for $\varepsilon$ with fixed $\delta = \delta\target$ and $\eta = \eta\target$
\State Find noise parameter $\noiseparams^*$, e.g., using binary search:
\[\omega^* \leftarrow \argmin_{\noiseparams \in \noiseparamspace} \text{ s.t. } \varepsilon_\noiseparams(\delta\target) \geq \varepsilon\target\]
\State \Return $\omega^*$
\end{algorithmic}
\end{algorithm}

For direct calibration to advantage, we first show how to practically use the expression in \cref{stmt:plrv-to-risks} to evaluate advantage using PLRVs:

\begin{algorithm}[H]
\caption{Compute advantage using PLRVs $(X, Y)$}\label{alg:get-adv}
\begin{algorithmic}[1]
\Require PMF $\Pr[X_\noiseparams = \tau]$ over grid $\{x_1, x_2, \ldots, x_k\}$ with $x_1 < x_2 < \ldots < x_k$
\Require PMF $\Pr[Y_\noiseparams = \tau]$ over grid $\{y_1, y_2, \ldots, y_l\}$ with $y_1 < y_2 < \ldots < y_l$
\Procedure{ComputeAdv}{$\noiseparams; X_\noiseparams, Y_\noiseparams$}
    \State $t_X \leftarrow \min \{ i \in [k] \mid x_i > 0 \}$, $t_Y \leftarrow \min \{ i \in [l] \mid y_i > 0 \}$
    \State \Return $\sum_{i = t_Y}^{l} \Pr[Y_\noiseparams = y_i] - \sum_{i = t_X}^{k} \Pr[X_\noiseparams = x_i]$
\EndProcedure
\end{algorithmic}
\end{algorithm}

Given \cref{alg:get-adv}, direct calibration to advantage amounts to, e.g., binary search:
\begin{algorithm}[H]
\caption{Direct advantage calibration using PLRVs $(X, Y)$}
\label{alg:direct-adv-calibration}
\begin{algorithmic}[1]
\Require $\eta\target$, PLRVs $X_\noiseparams, Y_\noiseparams$ (see \cref{alg:get-adv} for a more detailed specification)
\State Find noise parameter $\noiseparams^*$, e.g., using binary search:
\[
    \noiseparams^* \leftarrow \argmin_{\noiseparams \in \noiseparamspace} \text{ s.t. } \textsc{ComputeAdv}(\noiseparams; X_\noiseparams, Y_\noiseparams) \leq \eta\target
\]
\State \Return $\noiseparams^*$
\end{algorithmic}
\end{algorithm}

\paragraph{FPR/FNR Calibration} The standard approach to FPR/FNR calibration proceeds analogously to advantage calibration. First, the algorithm solves \cref{eq:dp-to-f} to obtain the value of $\varepsilon\target$ which ensures that a mechanism satisfies $f(\alpha\target) = \beta\target$. Then, the algorithm calibrates the noise to the obtained $(\varepsilon\target, \delta\target)$ pair using the privacy profile function $\varepsilon_\noiseparams(\delta)$:

\begin{algorithm}[H]
\caption{Standard FPR/FNR calibration}
\label{alg:standard-err-rates-calibration}
\begin{algorithmic}[1]
\Require $\alpha\target, \beta\target, \delta\target$, where $\delta\target < \frac{1}{n}$, privacy profile $\varepsilon_\noiseparams(\delta)$.
\State Find $\varepsilon\target$ by solving \cref{eq:dp-to-f} for $\varepsilon$ with fixed $\delta = \delta\target$ and $f(\alpha\target) = \beta\target$
\State Find noise parameter $\noiseparams^*$, e.g., using binary search:
\[\omega^* \leftarrow \argmin_{\noiseparams \in \noiseparamspace} \text{ s.t. } \varepsilon_\noiseparams(\delta\target) \geq \varepsilon\target\]
\State \Return $\omega^*$
\end{algorithmic}
\end{algorithm}

Direct calibration to FPR/FNR amounts to, e.g., binary search, using calls to \cref{alg:get-beta}:
\begin{algorithm}[H]
\caption{Direct FPR/FNR calibration using PLRVs $(X, Y)$}
\label{alg:direct-err-rates-calibration}
\begin{algorithmic}[1]
\Require $\alpha\target$, $\beta\target$, PLRVs $X_\noiseparams, Y_\noiseparams$ (see \cref{alg:get-beta} for a more detailed specification)
\State Find noise parameter $\noiseparams^*$, e.g., using binary search:
\[
    \noiseparams^* \leftarrow \argmin_{\noiseparams \in \noiseparamspace} \text{ s.t. } \textsc{ComputeBeta}(\noiseparams; \alpha\target; X_\noiseparams, Y_\noiseparams) \geq \beta\target
\]
\State \Return $\noiseparams^*$
\end{algorithmic}
\end{algorithm}
\section{Calibration to Other Risk Notions}
\label{app:other-attack-models}
Noise calibration for a given FPR/FNR level can be seen as a basic building block to calibrate for other operational measures of risk that are functions of FPR $\fpr$ and FNR $\fnr$.

For instance, \citet{rezaei2021difficulty} propose to measure the risks of membership inference attacks in terms of accuracy $\mathtt{acc}$ and FPR $\fpr$, where:
    $\mathtt{acc}(\fpr, \fnr) \define \nicefrac{1}{2} \cdot \left((1 - \fpr) + (1 - \fnr)\right).$
We can calibrate for a given level of accuracy $\mathsf{acc}\target$ and FPR $\fpr\target$ using the method in \cref{sec:err-rates-calibration} by solving the expression for accuracy for a given $\beta\target$.

\citet{jayaraman2021revisiting} propose to measure positive predictive value, or precision, of attacks:
\begin{align}
    \mathtt{ppv}(\fpr, \fnr) \define \frac{1 - \fnr}{1 - \fnr + \fpr}.
\end{align}
Although precision alone is not sufficient to determine the level of privacy, like with accuracy, we can calibrate for a given level of precision $\mathsf{ppv}\target$ \emph{and FPR} $\fpr\target$ by deriving the corresponding $\fnr\target$.

We provide the exact conversions in \cref{tab:other-risk-measures}. These enable practitioners to use the calibration method in \cref{sec:err-rates-calibration} while reporting technically equivalent but potentially more interpretable measures, e.g., attack accuracy at a given FPR.

\begin{table}[t]
    \centering
    \resizebox{.6\linewidth}{!}{
    \begin{tabular}{lll}
         Attack risk measure & Symbol & Derived $\fnr\target$ \\
         \hline
         Advantage & $\eta\target$ & $1 - \fpr\target - \eta\target$ \\
         Accuracy & $\mathtt{acc}\target$ & $2 (\fpr\target -  \mathtt{acc}\target)$ \\
         Positive predictive value / precision & $\mathtt{ppv}\target$ & $\frac{(\alpha\target - 1)(\mathtt{ppv}\target - 1)}{\mathtt{ppv}\target - 1}$
    \end{tabular}
    }
    \vspace{.5em}
    \caption{Some supported risk measures for calibration with a fixed level of FPR $\fpr\target$, with the derivation of the corresponding level of FNR $\fnr\target$. 
    Given $\fpr\target$ and the derived $\fnr\target$, we can calibrate noise using the procedure in \cref{sec:err-rates-calibration}.}
    \label{tab:other-risk-measures}
\end{table}

Although throughout the paper we have assumed that the hypotheses $H_0$ and $H_1$ both have probability $\nicefrac{1}{2}$, our results and conversions can be easily extended to settings where the hypotheses are not equiprobable, as proposed by \citet{jayaraman2021revisiting}.

\section{Dominating Pairs}
\label{app:connect-the-dots}

\subsection{Constructing Discrete Dominating Pairs and their PLRVs}
We summarize the technique from \citet{doroshenko2022connect} to construct a dominating pair from a composed mechanism $\mechanism(S) = \mechanism^{(1)} \circ \mechanism^{(2)} \circ \cdots \circ \mechanism^{(T)}(S)$. This models the common use case in privacy-preserving ML where a simple mechanism, such as the subsampled Gaussian in DP-SGD, is applied $T$ times. We assume that each sub-mechanism $\mechanism^{(i)}, i \in [T]$, has a known privacy curve $\delta_i(\varepsilon)$. Given an input discretization parameter $\Delta$, a size $k$, and a starting $\varepsilon_1$, \cite{doroshenko2022connect} creates a grid $\{ \varepsilon_1, \varepsilon_1 + \Delta, \ldots, \varepsilon_1 + k \Delta\}$. Then, they compute the privacy curve on this grid $\{ \delta_i(\varepsilon_1), \delta_i(\varepsilon_1 + \Delta), \ldots, \delta_i(\varepsilon_1 + k \Delta)\}$, and append the values of $\delta(-\infty) = 0$ and $\delta(\infty)$. The dominating pair for the $i^{\text{th}}$ mechanism is constructed using \cref{alg:compute_dom_pair}. Note that \cref{alg:compute_dom_pair} is identical to Algorithm 1 in \citet{doroshenko2022connect}, with the notation modified to be consistent with the notation in this paper.

This process is repeated for every mechanism. As long as the discretization parameter $\Delta$ is the same for all $T$ mechanisms, the resulting collection of PLRVs can can be composed via the Fast Fourier Transform. The dominating pair for the composed mechanism $\mechanism$ is simply the distribution of $(X_1 + X_2 \ldots + X_T, Y_1 + Y_2 \ldots + X_T)$. 

We remark that the discretization parameter $\Delta$ is user-defined, and the choice for the size $k$ and starting $\varepsilon$ for each grid is mechanism-specific. For further implementation details, we point the reader to the code \href{https://github.com/google/differential-privacy/blob/7126f6194117c54a1a34b75f7ebd47847f317264/common_docs/Privacy_Loss_Distributions.pdf}{documentation file} and the code itself, which can be found in the \texttt{dp\_accounting} Python library,. In particular, we note that while the PLRVs $X, Y$ have the same support except for atoms at $\pm \infty$, the support of the composed PLRV $X_1 + X_2 \ldots +X_T$ need not be the same as the the support of $Y_1 + Y_2 \ldots + Y_T$. This is because in the convolution part of the implementation of \citet{doroshenko2022connect}, the code discards any tail probabilities smaller than some truncation parameter. This is why we allow for $X$ and $Y$ to have different support in \cref{alg:get-beta}, and why we make no assumptions on the distributions of $(P,Q)$ or of $(X,Y)$ in the proof for \cref{stmt:plrv-to-risks}. 
\begin{algorithm}[t]
\caption{\cite{doroshenko2022connect} Construct a dominating pair}\label{alg:compute_dom_pair}
\begin{algorithmic}[1]
\Require Grid: $\{ -\infty, \varepsilon_1, \ldots, \varepsilon_k, \infty\}$. 
\Require Privacy curve on a grid: $\{ 0, \delta(\varepsilon_1), \ldots, \delta(\varepsilon_k), \delta(\infty)\}$. 
\State $P(\infty) = 0$
\For{$i = k-1, \ldots, 1$}
    \State $P(\varepsilon_i) \leftarrow \frac{\delta(\varepsilon_{i-1}) - \delta(\varepsilon_i)}{\exp(\varepsilon_i) - \exp(\varepsilon_{i-1})} - \frac{\delta(\varepsilon_{i}) - \delta(\varepsilon_{i+1})}{\exp(\varepsilon_{i+1})- \exp(\varepsilon_{i})}$
\EndFor
\State $P(-\infty) \leftarrow 1 - \sum_{j \in [k-1]} P(\varepsilon_j) $
\State $Q(-\infty) \leftarrow 0$
\For{$i = 1, \ldots, k-1$}
    \State $Q(\varepsilon_i) \leftarrow \exp(\varepsilon_i) P(\varepsilon_i)$
\EndFor
\State $Q(\infty) = \delta(\infty)$
\State \Return $(P,Q)$
\end{algorithmic}
\end{algorithm}

\subsection{Some Properties of the Trade-Off Curves of Discrete Dominating Pairs}
In this section, we provide several observations on the trade-off curve of discrete dominating pairs. In particular, these observations hold for the trade-off curve described  \cref{stmt:plrv-to-risks}.

\paragraph{Connecting the Dots} From the proof of \cref{stmt:plrv-to-risks} (see \cref{app:proof-plrv-to-risks}), we know that when the level $\fpr$ happens to equal a point in the reverse CDF of $X$, i.e. when $\fpr = \Pr[X > x_i]$ for some $i$, that the corresponding FNR $T(P,Q)(\fpr)$ is simply the CDF of $Y$ evaluated at the same point, i.e. $T(P,Q)(\fpr) = \Pr[Y \leq x_i]$. Since the reverse CDF of $X$ can take on $k+1$ values, it follows that there are $k+1$ values of $\fpr$ where the trade-off curve is fully characterized by the CDF of the PLRVs. 

Next, we observe a special structure of the trade-off curve on the points outside of these $k + 1$ values. For fixed $\tau$, \cref{eq:felipe_eq3} implies $\fpr^*(\tau, \gamma)$ is increasing linearly in $\gamma$  and \cref{eq:felipe_eq4} implies $\fnr^*(\tau, \gamma)$ is decreasing linearly in $\gamma$. This implies that the trade-off curve ``in between'' the $k+1$ points that correspond to the CDFs of the PLRVs is \emph{a linear interpolation}, where one ``connects the dots''. Hence, the trade-off curve is piece-wise linear, continuous everywhere, and not differentiable at the $k+1$ points where $\fpr$ happens to be on the reverse CDF of $X$.

This observation provides an interesting connection to \citet{doroshenko2022connect}, who showed that ``connecting the dots'' between finite points on the \emph{privacy profile} $\delta(e^\varepsilon)$\footnote{The linear interpolation must be done in $e^\varepsilon$ space, as in this grid the privacy profile $\delta(e^\varepsilon)$ is convex.} yields a valid pessimistic estimate to the privacy profile. Could ``connecting the dots'' in trade-off curve space also yield a valid pessimistic estimate? The answer is clearly no: ``connecting the dots'' on finite samples from a trade-off curve corresponds to an optimistic bound on the trade-off curve. Nevertheless, it is interesting to note that the class of discrete and finitely supported privacy loss random variables simultaneously achieve a pessimistic bound in privacy profile space and an optimistic bound in trade-off curve space. Further exploration of this phenomena, specifically in the context of constructing optimal optimistic privacy estimates, is left as future work. 

\paragraph{Behavior at the Edges} The trade-off curve of discrete dominating $(P, Q)$ in general does not satisfy $T(P,Q)(0) = 1$. Indeed, the point $\fpr = 0$ corresponds to $\tau = x_\text{max}$ and $\gamma = 0$, in which case $T(P,Q)(0) = \Pr[Y \leq x_\text{max}] = 1 - \Pr[Y > x_\text{max}]$. Whether or not this equals $1$ depends on the details of the PLRV $Y$, though we note that in our experiments, $T(P,Q)(0)$ is usually 1 to within a margin of $10^{-10}$. Moreover, we have that $T(P,Q)(\fpr) = 0$ for any $\fpr \in [\Pr[X > -\infty] , 1]$. Indeed, for any $\fpr \in [\Pr[X > -\infty] , 1]$, we have that $\tau = -\infty$, meaning that $\fnr^*(\tau,\gamma) = \Pr[Y \leq -\infty] = 0$ for any choice of $\gamma$.

The observation that $T(P,Q)(0) \neq 1$, that $T(P,Q)$ is piece-wise linear, and that $T(P,Q)(\fpr) = 0$ for any sufficiently large $\fpr$, are all consistent with the findings of \citet{discrete_fdp}, who characterized the trade-off curves of discrete-valued mechanisms.

\section{Omitted Proofs}
\label{app:proofs}

\subsection{Omitted Proofs in Section~\ref{sec:method}}

First, let us define the notion of the convex conjugate that we use in the proofs. For a given function $f: [0, 1] \rightarrow [0, 1]$, its convex conjugate $f^*$ is:
\begin{equation}\label{eq:conv-cojugate}
    f^*(y) = \sup_{0 \leq x \leq 1} y x - f(x),
\end{equation}
Next, we can show the omitted proofs.

\pairstof*
\begin{proof}

The proof follows from taking the convex conjugate of both sides of the following result from \citet{zhu2022optimal}:
\begin{proposition}[Lemma 20 from \citet{zhu2022optimal} restated in our notation] If a mechanism is $(\varepsilon, D_{e^\varepsilon}(P ~\|~ Q))$-DP, then it is $f$-DP for $f$ such that the following holds: $$D_{e^\varepsilon}(P ~\|~ Q) = 1 + f^*(-e^\varepsilon).$$
\end{proposition}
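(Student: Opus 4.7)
The plan is to combine the dominating-pair inequality on hockey-stick divergences with the duality statement quoted as Lemma 20 of \citet{zhu2022optimal}, and then invert via convex conjugation to recover a pointwise inequality between trade-off curves. The key structural fact underlying this approach is that the convex conjugate is an order-reversing involution on the class of convex lower semi-continuous functions (Fenchel--Moreau): an inequality between conjugates translates into the reverse inequality between the original functions, which is exactly how the direction of the hockey-stick bound flips into the direction we want for the trade-off curves.

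Concretely, I would proceed in four short steps. First, unpack \cref{def:dom-pair} to get $D_{e^\varepsilon}(M(S) \| M(S')) \leq D_{e^\varepsilon}(P \| Q)$ for every $\varepsilon \in \mathbb{R}$ and every $S \simeq S'$. Second, invoke the cited Lemma 20 applied separately to each pair of distributions to rewrite both divergences in conjugate form:
\begin{align*}
D_{e^\varepsilon}(M(S) \| M(S')) &= 1 + T(M(S), M(S'))^*(-e^\varepsilon), \\
D_{e^\varepsilon}(P \| Q) &= 1 + T(P, Q)^*(-e^\varepsilon),
\end{align*}
where $(\cdot)^*$ denotes the convex conjugate. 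Substituting these identities and canceling the additive $1$'s yields $T(M(S), M(S'))^*(y) \leq T(P, Q)^*(y)$ for every $y = -e^\varepsilon$, i.e., for every $y < 0$. Third, since trade-off curves are convex, continuous, and non-increasing on $[0,1]$, Fenchel--Moreau gives $T^{**} = T$; combined with the order-reversing property of conjugation, we obtain $T(M(S), M(S'))(\alpha) \geq T(P, Q)(\alpha)$ pointwise in $\alpha \in [0,1]$. Fourth, take the infimum over $S \simeq S'$ to close the argument.

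The main obstacle is handling the domain carefully in the third step: we only obtain the conjugate inequality on the half-line $y \in (-\infty, 0)$ traced out by $-e^\varepsilon$ as $\varepsilon$ ranges over $\mathbb{R}$, and we need this restricted inequality to suffice for the biconjugate identity on the full unit interval. This is resolved by noting that the convex conjugate of a trade-off curve is a trivial affine envelope on $y \geq 0$ that does not contribute any non-trivial supporting hyperplane to the biconjugate on $[0,1]$, so the behavior of $T^*$ at $y < 0$ already pins down $T^{**}$ on $[0,1]$. A smaller subtlety is confirming the monotonicity direction under conjugation, namely that $f \leq g$ pointwise implies $f^* \geq g^*$, which is the mechanism by which the $\leq$ in the hockey-stick inequality becomes the $\geq$ in the trade-off-curve conclusion.
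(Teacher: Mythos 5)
Your argument is correct and is essentially the paper's own proof of \cref{stmt:pairs-to-f} written out in full: the paper's terse ``take the convex conjugate of both sides'' of Lemma 20 is exactly your steps two through four (apply the hockey-stick/trade-off duality to both sides of the dominating-pair inequality, compare conjugates on $y=-e^\varepsilon<0$, and invert via Fenchel--Moreau), and your handling of the restricted domain is sound because every supporting slope of a non-increasing convex trade-off function is non-positive, so the conjugate's values on $y\geq 0$ contribute nothing to the biconjugate on $[0,1]$. The only caveat is that the displayed statement itself (the restated Lemma 20) is taken as given, cited from \citet{zhu2022optimal}, in both the paper and your write-up, so what you have actually proved is the enclosing proposition rather than the quoted duality identity.
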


Taking the convex conjugate of the equation above reveals that $f$ follows exactly the structure of the trade-off curve implied by the Neyman-Pearson optimal test, which is exactly $T(P,Q)$. See \cref{app:npl} for more details on the Neyman-Pearson lemma.

\end{proof}

\advpitfalls*
\begin{proof}[Proof]
Let us fix a pair of datasets $S \simeq S'$. Suppose that we have a mechanism $\mechanism: 2^{\sD} \rightarrow \{0, 1, 2, 3\}$ which satisfies $(\varepsilon, \delta)$-DP. Further, assume that for the specific fixed pair $S, S'$ it is defined as follows:
\begin{equation}
    \begin{matrix*}[l]
        P(\train(S) = 0) = 0 & P(\train(S') = 0) = \delta \\
        P(\train(S) = 1) = (1 - \delta) \cdot \frac{e^\epsilon}{e^\epsilon + 1}
        & P(\train(S') = 1) = (1 - \delta) \cdot \frac{1}{e^\epsilon + 1} \\
        P(\train(S) = 2) = (1 - \delta) \cdot \frac{1}{e^\epsilon + 1} & P(\train(S') = 2) = (1 - \delta) \cdot \frac{e^\epsilon}{e^\epsilon + 1} \\
        P(\train(S) = 3) = \delta & P(\train(S') = 3) = 0 \\
    \end{matrix*}
\end{equation}
The defining feature of this mechanism is that its trade-off curve $\tradeoff(\mechanism(S), \mechanism(S'))$ for $S, S'$ exactly matches the $f(\cdot)$ curve for generic $(\varepsilon, \delta)$-DP mechanisms in \cref{eq:dp-to-f}~\cite{kairouz2015composition}. Thus, for this mechanism we can use $f$ and $\tradeoff(\mechanism(S), \mechanism(S'))$ interchangeably. In the rest of the proof, we assume that we are calibrating this mechanism.

We want to derive (1) $f_\mathsf{standard}$ under standard calibration with $\delta\target = \nicefrac{1}{c \cdot n}$ and $\varepsilon\target$ chosen such that we have $\eta \leq \eta\target$, (2) $f_\mathsf{adv}$ under advantage calibration for ensuring $\eta\target$, and find their difference.

For this, we first solve \cref{eq:dp-to-adv} for $\varepsilon$ to derive the corresponding $\varepsilon\target$ that would satisfy the required level of $\eta\target$ under standard calibration with $\delta\target = \frac{1}{c \cdot n}$:
\begin{equation}\label{eq:proof-1}
    \varepsilon\target = \log( \frac{2\delta\target - \eta\target - 1}{\eta\target - 1} )
\end{equation}

As we are interested in the low $\fpr$ regime, let us only consider the following form of the DP trade-off curve from \cref{stmt:profiles-to-f}:
\begin{equation}\label{eq:proof-beta-form}
    f(\alpha) = 1 - \delta - e^{\varepsilon} \fpr.
\end{equation}
It is easy to verify that this this form holds for $0 \leq \fpr \leq \frac{1-\delta}{1 + e^{\varepsilon}}$. In the case of $(\varepsilon\target, \delta\target)$-DP with $\varepsilon\target$ defined by \cref{eq:proof-1}, a simple computation shows that this holds for $0 \leq \alpha \leq \frac{1 - \eta\target}{2}$.

To get $f_\mathsf{standard}$, we plug $(\varepsilon\target, \delta\target)$ into the form in \cref{eq:proof-beta-form}.
Recall that by \cref{eq:dp-to-adv} advantage calibration for generic DP mechanisms is equivalent to calibrating noise to $(0, \eta\target)$-DP. Thus, to get $f_\mathsf{adv}(\alpha)$, we plug into $\varepsilon=0, \delta=\eta\target$ to \cref{eq:proof-beta-form}. 
Subtracting the two, we get:
\begin{align}
     \Delta \fnr = \eta\target -  \delta\target + 2 \fpr \frac{\eta\target - \delta\target}{\eta\target - 1},
\end{align}
from which we get the sought form.
\end{proof}

\fprcorrectness*
\begin{proof}[Proof]
    Observe that \cref{alg:get-beta} computes the intermediate values of $\tau$ and $\gamma$ considered in the four cases of $\alpha$ values in the proof of \cref{stmt:plrv-to-risks} given in \cref{app:proof-plrv-to-risks}, and thus computes the valid trade-off curve $T(P, Q)(\fpr)$ as defined in \cref{eq:plrv-to-f}. By \cref{stmt:pairs-to-f}, $\mechanism_\noiseparams(\cdot)$ satisfies $f$-DP with $f = T(P, Q)$.
\end{proof}

\subsection{Proof of Theorem~\ref{stmt:plrv-to-risks}}
\label{app:proof-plrv-to-risks}

\plrvtorisks*

\cref{eq:plrv-to-adv} is an implication of a result by \citet{gopi2021numerical}, which states:
\begin{align}
    \delta(\varepsilon) = \Pr[Y > \varepsilon] - e^\varepsilon \Pr[X > \varepsilon].
\end{align}
We get \cref{eq:plrv-to-adv} by observing that $(0, \delta)$-DP bounds $\eta \leq \delta$ from \cref{stmt:dp-to-adv}.

In the remainder of the proof,  we show \cref{eq:plrv-to-f} and why choosing the threshold $\tau$ and coin flip probability $\gamma$ in the way specified in the theorem guarantees $T(P,Q)(\fpr) = \beta(\tau, \gamma)$. In \cref{app:notation4thm}, we establish the notation necessary for the remainder of the proof along with all the assumptions made. In \cref{app:npl}, we introduce the Neyman-Pearson lemma and use it to construct \cref{eq:plrv-to-f}. Finally, in \cref{app:thmproof}, we prove the final statement of the theorem. 

\subsubsection{Setup,  Notation, and Assumptions}\label{app:notation4thm}
Let the domain of $(P, Q)$ be  $\mathcal{O}$, which we assume to be countable. We refer to the probability mass function of $P$ as $P(\cdot)$ and similarly for $Q$. We allow for multiple atoms $o$  where $P(o) > 0$ and $Q(o) = 0$, and also multiple atoms $o'$ where $Q(o') > 0$ and $P(o') = 0$. We make no further assumptions on $(P,Q)$. 

Since $(P,Q)$ dominate the mechanism $\mechanism(\cdot)$, we know from \cref{stmt:pairs-to-f} that the hypothesis test:
\begin{align}\label{eq:PQtest}
H_0: o &\sim P, \quad H_1: o \sim Q
\end{align}
is easier (the trade-off curve is less than or equal to) that the standard DP hypothesis test:
\begin{align}
H_0: \theta \sim \mechanism(S), \quad H_1: \theta \sim \mechanism(S')
\end{align}
for all $S \simeq S'$. In \cref{app:npl}, we use the Neyman-Pearson Lemma to tightly characterize the trade-off curve implied by \eqref{eq:PQtest}. The notion of privacy loss random variables (PLRVs) $(X, Y)$, which were defined in \cref{def:plrv} as $Y \define \log \nicefrac{Q(o)}{P(o)}$ with $o \sim Q$, and $X \define \log \nicefrac{Q(o')}{P(o')}$ with $o' \sim P$,  appear naturally and play a central role in the proof. 

As such, we establish more notation on them. Let $\mathcal{T}$ denote the finite values that the PLRVs can take $$\mathcal{T} = \{\nicefrac{\log Q(o)}{P(o)} ~\mid~ \: o \in \mathcal{O},~P(o) > 0, ~Q(o) > 0\}.$$ We let the support of $X$ be 
$$\sX = \begin{cases}
    \{-\infty\} \cup \mathcal{T} \quad &\text{ if } \sup \mathcal{T} \in \mathcal{T} \\
    \{-\infty\} \cup \mathcal{T} \cup \{ \sup \mathcal{T} \} \quad &\text{ otherwise}.
\end{cases}$$ 
and we set $\Pr[X = \sup \mathcal{T}] = 0$ if we manually append $\sup \mathcal{T}$ to $X$. We do this to make the quantile of $X$ well-defined on all countable domains. Moreover, let $x_\text{max} = \sup \sX = \sup \mathcal{T}$. We will often refer to elements in the support of $X$ via $\sX = \{ -\infty, x_1, x_2, \ldots, x_\text{max}\}$.

\subsubsection{Applying the Neyman-Pearson Lemma}\label{app:npl}
According to the Neyman-Pearson Lemma~\cite[see, e.g.,][]{Romano2006, dong2019gaussian}, the most powerful attack at level $\fpr$ for the hypothesis test \eqref{eq:PQtest} is a threshold test $\phi^*: \mathcal{O} \rightarrow [0,1]$ parameterized by two numbers $\tau \in \mathbb{R} \cup \{-\infty, \infty\} , \gamma \in [0,1]$,
\begin{align}
    \phi_{\tau, \gamma}^*(o) = \begin{cases}
        1 \quad \text{if } Q(o) > e^\tau P(o)\\
        \gamma \quad \text{if } Q(o) = e^\tau P(o)\\
        0 \quad \text{if } Q(o) < e^\tau P(o).
    \end{cases}
\end{align}
which we can equivalently write as: 
\begin{align}\label{eq:neyman}
    \phi_{\tau, \gamma}^*(o) = \begin{cases}
        1 \quad \text{if } \log \frac{Q(o)}{P(o)} > \tau \\
        \gamma \quad \text{if } \log \frac{Q(o)}{P(o)} = \tau \\
        0 \quad \text{if } \log \frac{Q(o)}{P(o)} < \tau.
    \end{cases}
\end{align}
This threshold test works by flipping a coin and rejecting the null hypothesis (equivalently, guessing that $o$ came from $Q$) with probability $\phi_{\tau, \gamma}^*(o)$. Here, $\log \frac{Q(o)}{P(o)}$ is the Neyman-Pearson test statistic, and $\tau$ is the threshold for this test statistic. If the test statistic is less (greater) than the threshold, the test always rejects (accepts) the null hypothesis, and if the test statistic equals the threshold, the test flips a coin with probability $\gamma$ to reject the null hypothesis. 

The false positive rate of $\phi_{\tau, \gamma}^*$, which we denote by $\fpr$, is the probability that the null hypothesis is rejected ($\phi_{\tau, \gamma}^* > 0$) when the null hypothesis is true ($o \sim P$), and has the following form:
\begin{align}\label{eq:alpha}
    \fpr^*(\tau, \gamma) &\define \E_{o \sim P}[\phi^*_{\tau, \gamma}(o)]\\
    &= \Pr[X > \tau] + \gamma \Pr[X = \tau]. \label{eq:felipe_eq3}
\end{align}
Similarly, the false negative rate of $\phi_{\tau, \gamma}^*$, which we denote $\fnr$, is the probability that the null hypothesis is accepted ($1 - \phi_{\tau, \gamma}^* > 0$) when the null hypothesis is false ($o \sim Q$), and has the following form:
\begin{align}
    \fnr^*(\tau, \gamma) &\define 1- \E_{\theta \sim Q}[\phi^*_{\tau, \gamma}(\theta)] \\
    &= 1 - \left (\Pr[Y > \tau] + \gamma \Pr[Y = \tau] \right )\\
    &= \Pr[Y \leq \tau] - \gamma \Pr[Y = \tau].\label{eq:felipe_eq4}
\end{align}
We have thus shown the correctness of the construction of \cref{eq:plrv-to-f}. In \cref{app:thmproof}, we prove the final statement in \cref{stmt:plrv-to-risks}.

\subsubsection{Construction of the Trade-Off Curve of a Dominating Pair}\label{app:thmproof}
The goal of this section is to prove the following statement made in \cref{stmt:plrv-to-risks}:

\noindent\fbox{%
    \parbox{\textwidth}{%
        For any level $\fpr \in [0,1]$, choosing $\tau = (1-\alpha)$-quantile of $X$ and $\gamma = \frac{\alpha - \Pr[X > \tau]}{\Pr[X=\tau]}$ guarantees that: $$T(P,Q)(\fpr) = \beta^*(\tau, \gamma).$$
    }%
}

where $T(P,Q)(\fpr)$ outputs the false negative rate of the most powerful attack at level $\fpr$. From \cref{app:npl}, we know that the most powerful attack takes the form $\phi^*_{\tau, \gamma}$ as defined in \cref{eq:neyman}. One should think of the level $\fpr$ as a constraint on the attack $\phi^*_{\tau, \gamma}$. In particular, the constraint $\alpha^*(\tau,\gamma) = \alpha$ (where $\alpha^*$ is the false positive rate of $\phi^*_{\tau, \gamma}$ and is defined in \cref{eq:alpha}) yields a family of possible tests that all achieve the level $\fpr$. If $(P,Q)$ were continuous distributions, the constraint $\alpha^*(\tau,\gamma) = \alpha$ would uniquely determine the optimal test. This does not hold in the discrete case, and hence we must identify the most powerful test within this family.

Below, we list out 4 different regimes for the value of the level $\fpr$, identify the family of possible tests in each regime and the most powerful test, and finally give the false negative rate of the respective most powerful test.

\begin{enumerate}[leftmargin=*]
\item[1] \textbf{Case $\fpr = 1$}:
Recall that $X$ has a finite probability of being $-\infty$, meaning that the only way to have $\alpha^*(\tau, \gamma) = 1$ is to set $\tau = -\infty$ and $\gamma = 1$. The corresponding false negative rate is given by $\fnr^*(-\infty, 1) = \Pr[Y \leq -\infty] - \Pr[Y = -\infty] = 0$. 
\item[2] \textbf{Case $\fpr = 0$}:
If we choose the threshold $\tau = x_\text{max}$ and the coin flip probability $\gamma = 0$, then we have that the false positive rate of this test is:
\begin{align}
    \fpr^*(\tau = x_\text{max}, \gamma = 0) &= \Pr(X > x_\text{max}) + \gamma \Pr[X = x_\text{max}] \\ 
    &= 0.
\end{align}
Moreover, any test with $\tau > x_\text{max}$ has $\fpr^*(\tau, \gamma) = 0$. However, increasing the threshold above $x_\text{max}$ can never decrease $\beta^*$. Moreover, a test with a threshold $\tau < x_\text{max}$ cannot achieve $\fpr = 0$. It follows that choosing $(\tau = x_\text{max}, \gamma = 0)$ yields the most powerful test, which has a false negative rate of $\beta^*(x_\text{max}, 0) = \Pr[Y \leq x_\text{max}]$.

\item[3] \textbf{Case $\fpr = \Pr[X > x_t]$ for some $x_t \in \sX$:}
If we choose the threshold $\tau = x_t$ and coin flip probability $\gamma = 0$, then we have that the false positive rate of this test is
\begin{align}
    \fpr^*(\tau = x_t, \gamma = 0) &= \Pr(X > x_t) + 0 \\ 
    &= \fpr.
\end{align}
Moreover, the test $\phi^*_{x_{t+1}, 1}$ and any test with $\tau \in (x_t, x_{t+1})$ has $\alpha^*(\tau, \gamma) =\alpha$. It is straightforward to see that all these tests are equivalent to outputting 1 if  $\log \frac{Q(o)}{P(o)} > x_t$ and 0 otherwise, making them all equivalent to $\phi^*_{x_t, 0}$. Note that no other test can achieve the level $\alpha$, since decreasing the threshold below $x_t$ or above $x_{t+1}$ makes it impossible to achieve level $\alpha$. For fixed threshold $\tau = x_t \: (x_{t+1})$, only a coin flip probability of $\gamma = 0$ or $\gamma = 1$ achieves level $\alpha$. We conclude that all the tests that achieve level $\fpr$  have a false negative rate of $\beta^* = \Pr[Y \leq x_t]$.

\item[4] \textbf{Otherwise}:
If we choose the threshold 
\begin{align}\label{eq:sandwich}
    \tau = \inf\{x \in \sX ~|~ \alpha \geq \Pr[X > x]\}
\end{align}
and choose the coin flip probability $\gamma$ to exactly satisfy the constraint that $\fpr^*(\tau, \gamma) = \alpha$, i.e.,
\begin{equation}\label{eq:gamma}
    \gamma = \frac{\fpr - \Pr[X > x_t]}{\Pr[X = x_t]},
\end{equation} 
then this test achieves a false positive rate of $\fpr$. It is easy to see that this is the only test that achieves level $\alpha$, and has a false negative rate of $\beta^* = \Pr[Y \leq x_t] - \gamma \Pr[Y = x_t]$.

\end{enumerate}
Note that in all regimes, there is one unique test that achieves a level $\fpr$ and is the most powerful test. However, in some regimes of $\fpr \in [0,1]$, namely regime 3, there are many \emph{different parameterizations} for the same test. In these cases, we are free to choose any parameterization. For each regime, the very first test we list is the parameterization we choose. To summarize, we have the following most powerful tests: 

\begin{enumerate}
    \item When $\alpha = 1$, choose $\tau = -\infty, \gamma = 1$
    \item When $\alpha = 0$, choose $\tau = x_\text{max}, \gamma = 0$
    \item When $\alpha = \Pr[X > x_t]$, choose $\tau = x_t, \gamma = 0$
    \item Else, choose $\tau $ via \cref{eq:sandwich}, and $\gamma = \frac{\fpr - \Pr[X > \tau]}{\Pr[X = \tau]}$.
\end{enumerate}

It is clear from the list above that for distributions with finite support, the most powerful test can be concisely written as:
\begin{align}
    \tau &= \inf \{x \in \sX ~\mid~ \alpha \geq \Pr[X > x]\}\label{eq:quantile} \\
    \gamma &=\frac{\fpr - \Pr[X > \tau]}{\Pr[X = \tau]}\label{eq:gamma1}.
\end{align}
where we recognize  $\tau$ as the $(1-\alpha)$-quantile of $X$. 

Note that for distributions with countably infinite support, \cref{eq:gamma1} does not capture Case 2, since $\Pr[X = x_\text{max}] = 0$. So, we define $\gamma = 0$ whenever $\alpha = 0$, and $\gamma = $ \cref{eq:gamma1} otherwise. Since this work focuses on using PLRVs from \citet{doroshenko2022connect}, which are always finitely supported, we report \cref{eq:quantile} and \cref{eq:gamma1} without this edge case in the main body.

We remark that similar results regarding the trade-off curve between two discrete mechanisms can be found in \citet{discrete_fdp}. We differ from this work by parameterizing the trade-off curve using PLRVs, in contrast to \citeauthor{discrete_fdp}, who parameterized the trade-off curve in terms of the discrete distributions $P$ and $Q$. Our parameterization lends itself more naturally to composition, as the PLRVs sum under composition.

\section{Practical Considerations}
\label{app:practical-considerations}

The algorithm of \citet{doroshenko2022connect}, which is implemented in the \texttt{dp\_accounting} Python library,\footnote{\href{https://github.com/google/differential-privacy/tree/0e99a6ff2af75168a22f82548375102fa1a48c8e/python/dp_accounting/dp_accounting/pld}{https://github.com/google/differential-privacy/tree/main/python/dp\_accounting/dp\_accounting/pld}} handles Poisson subsampling under composition (i.e. accounting for DP-SGD) by analyzing the removal and add relations separately. This approach, to the authors knowledge, was first advocated for by \citet{zhu2022optimal} (see the discussion in their Appendix). 

In particular, instead of the algorithm outputting a dominating pair $(P,Q)$ that dominates for the symmetric add/remove relation under composition, it outputs one dominating pair for the asymmetric remove relation $(P_\text{remove},Q_\text{remove})$ and one for the asymmetric add relation $(P_\text{add},Q_\text{add})$. This means that naively applying \cref{stmt:plrv-to-risks} to, for example, $(P_\text{add},Q_\text{add})$, will return a trade-off curve that is only valid for DP-SGD under the  asymmetric add relation. 

To handle the case when \cref{stmt:plrv-to-risks} is applied to a dominating pair $(P,Q)$ (equivalently, the PLRVs $(X,Y)$) that only dominate a mechanism under an asymmetric neighboring relation, a more sophisticated technique is needed to map $T(P,Q)$ to the target symmetric neighboring relation. In particular, a result from \cite{dong2019gaussian} explains how to handle this case:

\begin{proposition}[Proposition F.2 from \citet{dong2019gaussian}]\label{stmt:f-conjugate} Let $f: [0,1] \rightarrow [0,1]$ be a convex, continuous, non-increasing function with $f(x) \leq 1-x$ for $x \in [0,1]$. Suppose a mechanism $\mechanism$ is $(\varepsilon, 1 + f^*(-e^\varepsilon))$-DP for all $\varepsilon \geq 0$, then it is $\textrm{Symm}(f)$-DP with the symmetrization operator $\textrm{Symm}(f)$ defined as:
\begin{equation}
    \textrm{Symm}(f)(x) = 
    \begin{cases}
        \{f, f^{-1} \}^{**}, \quad &\text{ if } \bar{x} \leq f(\bar{x}), \\
        \max \{f, f^{-1} \}, &\text{ if } \bar{x} > f(\bar{x}),
    \end{cases}
\end{equation}
where $\bar{x} = \inf \{ x \in [0,1] ~|: -1 \in \partial f(x)\}$, and 
\begin{equation}
    \{f, f^{-1} \}^{**}(x) = 
    \begin{cases}
        f(x), \quad &\text{ if } x\leq \bar{x}, \\
        f(\bar{x}), &\text{ if } \bar{x} < x \leq f(\bar{x}), \\ 
        f^{-1}(x), &\text{ if } x > f(\bar{x}).
    \end{cases}
\end{equation}

\end{proposition}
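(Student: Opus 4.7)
The plan is to leverage the duality between hockey-stick divergences and trade-off functions---the Zhu--Dong--Wang identity $D_{e^\varepsilon}(P \,\|\, Q) = 1 + T(P, Q)^{*}(-e^\varepsilon)$ already used in the proof of \cref{stmt:pairs-to-f}---together with the symmetry built into the usual differential privacy definition. Fix neighbors $S \simeq S'$ and set $P := \mathcal{M}(S)$, $Q := \mathcal{M}(S')$. Because $(\varepsilon, \delta)$-DP is stated as a supremum over the symmetric neighborhood relation, the hypothesis gives simultaneously
$T(P,Q)^{*}(y) \leq f^{*}(y)$ and $T(Q,P)^{*}(y) \leq f^{*}(y)$ for every $y \in (-\infty, -1]$.

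Applying the inverse Legendre transform pointwise, each of these inequalities recovers the primal function only on the portion of its domain where the subgradient of $f$ is at most $-1$, namely $[0, \bar{x}]$: the first yields $T(P,Q)(x) \geq f(x)$ on $[0, \bar{x}]$; the second, combined with the identity $T(Q,P) = T(P,Q)^{-1}$ (the $0/1$-label-swap symmetry of the Neyman--Pearson test) and the implication $g \geq f \implies g^{-1} \geq f^{-1}$ for non-increasing functions, gives $T(P,Q)(x) \geq f^{-1}(x)$ on $[f(\bar{x}), 1]$. The remainder is a case analysis on whether these two subdomains overlap or leave a gap.

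If $\bar{x} > f(\bar{x})$ (case 2), the intervals $[0, \bar{x}]$ and $[f(\bar{x}), 1]$ cover $[0,1]$, so $T(P,Q) \geq \max\{f, f^{-1}\}$ pointwise; because $f$ and $f^{-1}$ cross below the diagonal with the steeper branch having slope $\leq -1$, $\max\{f, f^{-1}\}$ is already a valid convex trade-off function, matching the case-2 clause of $\textrm{Symm}(f)$. If $\bar{x} \leq f(\bar{x})$ (case 1), the middle interval $[\bar{x}, f(\bar{x})]$ is not covered by the pointwise estimates. There the argument uses (i) convexity of $T(P,Q)$, (ii) convexity of $T(P,Q)^{-1} = T(Q,P)$, which pins the endpoint values $T(P,Q)(\bar{x}) \geq f(\bar{x})$ and $T(P,Q)(f(\bar{x})) \geq \bar{x}$, and (iii) the slope-$-1$ tangent behavior of $f$ at $\bar{x}$ and of $f^{-1}$ at $f(\bar{x})$, to conclude that $T(P,Q)$ is bounded below by the piecewise interpolation described by $\{f, f^{-1}\}^{**}$ in the proposition.

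The main technical obstacle is the case-1 middle-region analysis: showing that the piecewise formula in the proposition is both (a) the tightest convex, non-increasing lower envelope compatible with the two pointwise bounds and their slope-$-1$ tangencies at the endpoints, and (b) itself a valid trade-off function (so that its inverse is also convex), since this global constraint is what forces $T(P,Q)^{-1}$'s convexity to close the loop. A secondary subtlety is carefully tracking the domain of Legendre recovery: the bound $T^{*}(y) \leq f^{*}(y)$ for $y \leq -1$ alone only encodes the shape of $T$ on $[0, \bar{x}]$; only combining it with the mirror bound from the inverse direction pins down the behavior on the right piece, and the symmetrization operator $\textrm{Symm}$ is precisely the bookkeeping device that records this partial information as a single convex function.
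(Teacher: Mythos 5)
First, note that the paper does not prove this statement at all: it is imported verbatim as Proposition F.2 of \citet{dong2019gaussian}, so there is no in-paper argument to compare against. Your skeleton is the right one and is essentially how the result is obtained in that reference: the identity $D_{e^\varepsilon}(P\,\|\,Q)=1+T(P,Q)^*(-e^\varepsilon)$, the observation that the hypothesis (being symmetric in $S\simeq S'$) yields $T(P,Q)^*\le f^*$ \emph{and} $T(Q,P)^*\le f^*$ on $(-\infty,-1]$, the fact that conjugate information on $(-\infty,-1]$ only pins down a convex function where its subgradients are $\le -1$ (i.e.\ on $[0,\bar x]$), and the identity $T(Q,P)=T(P,Q)^{-1}$ giving $T(P,Q)\ge f^{-1}$ on $[f(\bar x),1]$. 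Case 2 ($\bar x>f(\bar x)$) is then complete as you describe.

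The genuine gap is your case-1 middle region. The ingredients you list there — convexity of $T$ and of $T^{-1}$ together with the endpoint values $T(\bar x)\ge f(\bar x)$ and $T(f(\bar x))\ge \bar x$ — cannot produce a lower bound on $T$ in the interior of $(\bar x,f(\bar x))$: convexity bounds a function \emph{above} by chords through known points, never below, and monotonicity only yields $T(x)\ge T(f(\bar x))\ge\bar x$ there, which is strictly weaker than what is needed. The step that actually fills the gap is a single Fenchel inequality at the boundary point $y=-1$ of the very conjugate bound you already have: $T(x)\ge -x-T^*(-1)\ge -x-f^*(-1)=\bar x+f(\bar x)-x$ for \emph{all} $x$, since $f^*(-1)=-(\bar x+f(\bar x))$. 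Equivalently, the $\varepsilon=0$ instance of the hypothesis is a total-variation bound that supplies the slope-$(-1)$ supporting line across the middle region directly; you have this tool in hand but never deploy it where it is needed. Relatedly, be aware that the middle clause of $\{f,f^{-1}\}^{**}$ as transcribed in the statement (the constant $f(\bar x)$ on $(\bar x,f(\bar x)]$) cannot be what is meant — it would make $\textrm{Symm}(f)$ non-convex and is not implied by the hypotheses (a mechanism whose trade-off curve \emph{equals} the slope-$(-1)$ interpolation satisfies all the assumptions but violates the constant bound). The correct middle piece is the line segment $x\mapsto \bar x+f(\bar x)-x$ joining $(\bar x,f(\bar x))$ to $(f(\bar x),\bar x)$; your instinct about "slope-$(-1)$ tangent behavior" points at the right target, but your proof as written neither proves the (false) constant clause nor derives the linear one.
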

Though not explicitly stated, the proposition does assume the mechanism $\mechanism(\cdot)$ has a symmetric neighboring relation. By letting $f$ be unspecified however, the proposition allows for the input function $f$ to correspond to an asymmetric neighboring relation. In this case, the proposition returns a trade-off curve that holds for the symmetric neighboring relation. 

We can hence apply this proposition to the problem at hand by recalling that given a dominating pair $(P,Q)$, we have that the mechanism is $(\varepsilon, D_{e^\varepsilon}(P ~\|~ Q))$-DP. Moreover, \cref{stmt:plrv-to-risks} outputs the trade-off function $f = T(P,Q)$, which is exactly the function $f$ such that $D_{e^\varepsilon}(P ~\|~ Q) = 1 + f^*(-e^\varepsilon)$. We can thus restate \cref{stmt:f-conjugate} in more familiar form as: 

\begin{proposition}[Proposition F.2 from \citet{dong2019gaussian} restated]\label{stmt:f-conjugate1} Suppose that $(P,Q)$ is a dominating pair for a mechanism $\mechanism(\cdot)$ under either the add or remove relation.  Then, the mechanism is $\textrm{Symm}(T(P,Q))$-DP with respect to the add/remove relation. 
\end{proposition}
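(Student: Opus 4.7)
The plan is to show that this statement is essentially a repackaging of Proposition F.2 from \citet{dong2019gaussian} once we identify the right objects. First, I would unpack the dominating pair assumption: by \cref{def:dom-pair}, for every $\varepsilon \in \mathbb{R}$ we have $\sup_{S \simeq S'} D_{e^\varepsilon}(\mechanism(S) \| \mechanism(S')) \leq D_{e^\varepsilon}(P \| Q)$, so the mechanism satisfies $(\varepsilon, D_{e^\varepsilon}(P \| Q))$-DP under the asymmetric relation ($\mathsf{add}$ or $\mathsf{remove}$) for which $(P,Q)$ dominates.

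The key conceptual step is to rewrite $D_{e^\varepsilon}(P \| Q)$ in the form required by \cref{stmt:f-conjugate}, namely $1 + f^*(-e^\varepsilon)$ for some admissible trade-off function $f$. I would choose $f = T(P,Q)$ and invoke the duality between hockey-stick divergences and trade-off functions due to Zhu, Dong, and Wang (2022) (already cited above in the proof of \cref{stmt:pairs-to-f}), which gives exactly $D_{e^\varepsilon}(P \| Q) = 1 + f^*(-e^\varepsilon)$ for all $\varepsilon \geq 0$. This identification is the core ingredient and makes the hypothesis of \cref{stmt:f-conjugate} immediate.

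Next, I would verify the regularity conditions on $f = T(P,Q)$ needed for \cref{stmt:f-conjugate}: convexity, continuity, monotone non-increasing, and $f(x) \leq 1 - x$ on $[0,1]$. These are standard properties of any trade-off function between two probability distributions, established in \citet{dong2019gaussian}. With these conditions verified, I can apply \cref{stmt:f-conjugate} directly, which converts the asymmetric $(\varepsilon, D_{e^\varepsilon}(P\|Q))$-DP guarantee into the desired $\textrm{Symm}(T(P,Q))$-DP guarantee under the symmetric add/remove relation.

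The main obstacle will be handling the discrete case cleanly. The trade-off curves constructed via \cref{stmt:plrv-to-risks} are piecewise linear and potentially discontinuous at the boundary of the support of $X$, so I would need to be careful that the convex-conjugate duality $D_{e^\varepsilon}(P \| Q) = 1 + f^*(-e^\varepsilon)$ still holds and that $f$ remains in the admissible class of trade-off functions. I expect this to follow from the observation that $T(P,Q)$ is convex and non-increasing by construction (being an infimum of linear functions via the Neyman--Pearson lemma, as shown in the proof of \cref{stmt:plrv-to-risks}), so no essential obstacle arises; the discreteness simply shows up as kinks in $f$ and corresponding atoms in $f^*$, both compatible with \cref{stmt:f-conjugate}.
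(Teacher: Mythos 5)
Your proposal matches the paper's argument: the paper likewise passes from the dominating pair to the $(\varepsilon, D_{e^\varepsilon}(P\,\|\,Q))$-DP guarantee, identifies $f = T(P,Q)$ as the function with $D_{e^\varepsilon}(P\,\|\,Q) = 1 + f^*(-e^\varepsilon)$ via the Zhu--Dong--Wang duality, and then applies \cref{stmt:f-conjugate} directly. Your extra care about the discrete case is fine but not an obstacle — the paper notes that $T(P,Q)$ for discrete dominating pairs is piecewise linear, convex, and continuous, so it lies in the admissible class.
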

\cref{stmt:f-conjugate1} allows us to, for example, use a dominating pair for the asymmetric add relation to obtain a trade-off curve for the  symmetric add/remove relation. Moreover, the operator $\textrm{Symm}(T(P,Q))$ turns out to be straightforward to implement in practice. 

\cref{app:thmproof} details how to explicitly construct $T(P,Q)$. It is well known that $T(Q,P)(\alpha) = T(P,Q)^{-1}(\alpha)$, hence the order of $(P,Q)$ can be easily swapped in \cref{app:thmproof} to get the inverse function $T(P,Q)^{-1}$. The only obstacle remaining is in determining $\bar{x} = \inf \{ x \in [0,1] ~|: -1 \in \partial f(x)\}$. Due to the structure of  $T(P,Q)$, namely that it is a piece-wise linear function parameterized by \cref{eq:felipe_eq3} and \cref{eq:felipe_eq4}, it turns out that the subdifferential $\partial f(x)$ are of the form $\{e^\tau\}$, where $\tau$ are the allowable thresholds of the Neyman-Pearson lemma at level $x$ identified in each of the 4 cases of the proof laid out in \cref{app:thmproof}. As an example, a unique threshold of $-\infty$ at $\alpha = 1$ implies that the derivative of $T(P,Q)$ at $\alpha = 1$ is $0$, meaning the trade-off curve is flat there. 

It follows that the constraint $\bar{x} = \inf \{ x \in [0,1] ~|: -1 \in \partial f(x)\}$ implies that $\bar{x}$ is the smallest level $\fpr$ where the threshold switches signs, i.e. $\bar{x} = \alpha^*( \tau = 0, \gamma = 0) = \Pr[X > 0]$ and $f(\bar{x}) = \beta^*( \tau = 0, \gamma = 0) = \Pr[Y \leq 0 ]$. This gives us all the information needed to implement the Symm operator.

\section{Calibrating Gaussian Mechanism}
\label{app:calibrating-specific-mechanisms}
In the case where the trade-off curve of the mechanism has a closed form, we can solve the calibration problems in \cref{eq:adv-calibration,eq:f-calibration} exactly without resorting to the numerical procedures in \cref{sec:adv-calibration,sec:err-rates-calibration}.

\begin{definition}
For a given non-private algorithm $\alg: 2^\sD \rightarrow \sR^d$, a Gaussian mechanism (GM) is defined as $\mechanism(S) = \alg(S) + \xi$, where $\xi \sim \mathcal{N}(0, \Delta_2 \cdot \sigma^2 \cdot I_d)$ and $\Delta_2 \define \sup_{S \simeq S'} \| \alg(S) - \alg(S')\|_2$ is the \emph{sensitivity} of $\alg(S)$.
\end{definition}
For the Gaussian mechanism, we can exactly compute the relevant adversary's error rates:
\begin{proposition}[\citet{balle2018improving, dong2019gaussian}]
    \label{stmt:gm-props}
    Suppose that $\mechanism_\sigma(S)$ is GM with sensitivity $\Delta_2$ and noise variance $\sigma^2$. Denote by $\mu = \nicefrac{\Delta_2}{\sigma}$ and by $\Phi(t)$ the CDF of the standard Gaussian distribution $\mathcal{N}(0, 1)$. Then,
    \begin{itemize}
        \item The mechanism satisfies $(\varepsilon, \delta)$-DP if the following holds:
        \begin{align}
            \label{eq:gaussian-adv}
            \delta = \Phi\left(\frac{\mu}{2} - \frac{\varepsilon}{\mu} \right) - e^\varepsilon \Phi\left(-\frac{\mu}{2} - \frac{\varepsilon}{\mu} \right)
        \end{align}
        \item It satisfies $f$-DP with:
        \begin{align}
            \label{eq:gaussian-trade-off}
            f(\alpha) = \Phi\left(\Phi^{-1}(1 - \fpr) - \mu\right)
        \end{align}
    \end{itemize}
\end{proposition}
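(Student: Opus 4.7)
The plan is to reduce both statements to the canonical one-dimensional problem of distinguishing $P = \mathcal{N}(0,1)$ from $Q = \mathcal{N}(\mu,1)$, and then compute directly. The reduction rests on two symmetries of the isotropic Gaussian: translation invariance implies that $D_\gamma(\mechanism(S)\|\mechanism(S'))$ and $T(\mechanism(S),\mechanism(S'))$ depend on $\alg(S),\alg(S')$ only through the displacement $v = \alg(S')-\alg(S)$; and rotational invariance lets us orthogonally rotate coordinates so that $v$ lies along a single coordinate axis. The coordinates orthogonal to $v$ then contribute an identical product-Gaussian factor to $\mechanism(S)$ and $\mechanism(S')$, so the Radon--Nikodym derivative depends only on the coordinate along $v$. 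By sufficiency, this projection preserves both the hockey-stick divergence and the trade-off curve exactly. The worst case over $S\simeq S'$ is attained at $\|v\|_2=\Delta_2$, and after rescaling by the noise standard deviation we are left with $\mathcal{N}(0,1)$ vs.\ $\mathcal{N}(\mu,1)$ for $\mu=\Delta_2/\sigma$.

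For claim (i), I apply the definition $D_{e^\varepsilon}(Q\|P)=\sup_E\{Q(E)-e^\varepsilon P(E)\}$. Because the likelihood ratio $q(x)/p(x)=\exp(\mu x-\mu^2/2)$ is monotone in $x$, the supremum is attained on a half-line $E^\star=\{x>t\}$ where $q(t)/p(t)=e^\varepsilon$, i.e. $t = \varepsilon/\mu+\mu/2$. Substituting yields $Q(E^\star)=\Phi(\mu-t)=\Phi(\mu/2-\varepsilon/\mu)$ and $P(E^\star)=\Phi(-t)=\Phi(-\mu/2-\varepsilon/\mu)$, which combine into the stated expression for $\delta$.

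For claim (ii), I invoke the Neyman--Pearson lemma: the most powerful level-$\alpha$ test for $P$ vs.\ $Q$ is a likelihood-ratio test, which by monotonicity is again a threshold test $\mathbf{1}\{X>t\}$. Fixing the type-I error to $\alpha$ forces $t=\Phi^{-1}(1-\alpha)$, so the resulting type-II error is $Q(X\leq t)=\Phi(t-\mu)=\Phi(\Phi^{-1}(1-\alpha)-\mu)$, matching \cref{eq:gaussian-trade-off}. This agrees with the $f$-DP definition because we already argued the worst case over neighbors is achieved by the $\mu$-separated pair.

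The main obstacle is making the multivariate-to-univariate reduction fully rigorous — specifically, ensuring the projection onto the one-dimensional span of $v$ preserves $D_\gamma$ and $T$ with equality (not merely as an inequality from data processing). This is resolved by noting that the projection is a sufficient statistic for $\{P,Q\}$, so any measurable event $E$ in $\mathbb{R}^d$ can be replaced by $E'=\{x : \mathbb{E}[\mathbf{1}_E \mid \pi(x)]>c\}$ for an appropriate $c$ without decreasing the divergence, and similarly the Neyman--Pearson optimal test depends only on the projected coordinate. Once the reduction is in place, the remaining computations are standard Gaussian integrals and already appear in \citet{balle2018improving,dong2019gaussian}.
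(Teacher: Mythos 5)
The paper does not prove this proposition; it imports it verbatim from \citet{balle2018improving} (claim (i), the analytic Gaussian mechanism) and \citet{dong2019gaussian} (claim (ii), Gaussian differential privacy), so there is no paper proof to compare against. Your argument is correct and is, in substance, the standard derivation used in those works: reduce to the scalar pair $\mathcal{N}(0,1)$ vs.\ $\mathcal{N}(\mu,1)$ by translation and rotation invariance plus sufficiency of the one-dimensional projection, then compute the hockey-stick divergence by taking the sup over the half-line $\{x > \varepsilon/\mu + \mu/2\}$ where the likelihood ratio exceeds $e^\varepsilon$, and compute the trade-off curve from the Neyman--Pearson threshold test at level $\alpha$.

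Two small points worth noting. First, your notation $D_{e^\varepsilon}(Q\|P) = \sup_E Q(E) - e^\varepsilon P(E)$ does not match the paper's (nonstandard) convention in \cref{eq:hockeystick-def}, which would write this quantity as $D_{e^\varepsilon}(P\|Q)$; the scalar computation you carry out is the correct one either way, this is purely a labelling mismatch. Second, the DP condition requires the bound to hold for both orderings of $S, S'$ (i.e.\ both $D_{e^\varepsilon}(P\|Q)$ and $D_{e^\varepsilon}(Q\|P)$ in the paper's convention); you only check one direction, but for the isotropic Gaussian the two are equal by reflection symmetry, so nothing is lost — it is just worth saying explicitly. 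Beyond that, the sufficiency argument you sketch to upgrade the data-processing inequality to an equality under projection is exactly the right way to make the multivariate-to-univariate reduction rigorous.
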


With these closed-form expressions, we can solve the calibration problems exactly:
\begin{corollary}[Advantage calibration for GM]
    \label{stmt:gm-adv-cal}
    For a GM $\mechanism_\sigma(S)$ and target $\eta\target > 0$, choosing $\sigma$ as:
    \begin{equation}
        \sigma = \frac{\Delta_2}{2 \Phi^{-1}\left(\frac{\eta\target + 1}{2}\right)}
    \end{equation}
    ensures that adversary's advantage is upper bounded by $\eta\target$.
\end{corollary}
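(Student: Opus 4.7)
The plan is to reduce the corollary to the closed-form expressions already provided in \cref{stmt:gm-props} combined with the link between $(0,\delta)$-DP and advantage given by \cref{stmt:dp-to-adv}. Concretely, I would proceed in three short steps.

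First, I would recall that \cref{stmt:dp-to-adv} specializes to $\eta \le \delta$ whenever the mechanism is $(0,\delta)$-DP, since $\tfrac{e^{0} - 1 + 2\delta}{e^{0}+1} = \delta$. Thus a sufficient condition to guarantee $\eta \le \eta\target$ is to choose $\sigma$ so that the Gaussian mechanism is $(0,\eta\target)$-DP with equality.

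Second, I would plug $\varepsilon = 0$ into the Gaussian privacy profile in \cref{eq:gaussian-adv} and simplify, using $\mu = \Delta_2/\sigma$:
\begin{equation}
    \delta_\sigma(0) \;=\; \Phi\!\left(\tfrac{\mu}{2}\right) - \Phi\!\left(-\tfrac{\mu}{2}\right) \;=\; 2\,\Phi\!\left(\tfrac{\mu}{2}\right) - 1,
\end{equation}
where the last equality uses the symmetry $\Phi(-t) = 1 - \Phi(t)$. Setting $\delta_\sigma(0) = \eta\target$ and solving for $\mu$ yields $\mu = 2\Phi^{-1}\!\bigl(\tfrac{\eta\target+1}{2}\bigr)$, and then $\sigma = \Delta_2/\mu$ gives the stated expression.

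As a cross-check, I would also derive the same formula by directly maximizing the attack advantage via the trade-off curve in \cref{eq:gaussian-trade-off}. Writing $\eta = \sup_{\alpha \in [0,1]} \bigl(1 - \alpha - f(\alpha)\bigr)$ with $f(\alpha) = \Phi(\Phi^{-1}(1-\alpha) - \mu)$, the first-order condition gives an interior maximum at the symmetric point $\alpha^* = \Phi(-\mu/2)$, at which $\eta = \Phi(\mu/2) - \Phi(-\mu/2) = 2\Phi(\mu/2) - 1$, matching the bound above and confirming it is actually attained (so the calibration is tight for the Gaussian mechanism). There is no real obstacle here; the only subtlety is ensuring that the conversion $\eta \le \delta$ from \cref{stmt:dp-to-adv} is tight for $\varepsilon = 0$, which the trade-off-curve derivation verifies.
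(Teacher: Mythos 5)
Your proof is essentially the paper's own argument: reduce to a $(0,\eta\target)$-DP guarantee via \cref{stmt:dp-to-adv}, plug $\varepsilon=0$ into the Gaussian privacy profile in \cref{eq:gaussian-adv}, and solve $2\Phi(\mu/2)-1=\eta\target$ for $\mu=\Delta_2/\sigma$. The extra cross-check via the trade-off curve in \cref{eq:gaussian-trade-off} is a correct and sensible bonus showing the calibration is tight for the Gaussian mechanism, but it does not change the core route, which matches the paper.
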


\begin{proof}[Proof of \cref{stmt:gm-adv-cal}]
    It is sufficient to ensure $(0, \eta\target)$-DP. Plugging in $\varepsilon = 0$ and $\delta = \eta\target$ into \cref{eq:gaussian-adv}, we have:
    \begin{align}
         \eta\target = \Phi\left(\frac{\mu}{2}\right) - \Phi\left(-\frac{\mu}{2}\right) = 2\Phi\left(\frac{\mu}{2}\right) - 1,
    \end{align}
    from which we can derive $\mu = \frac{\Delta_2}{\sigma} = 2 \Phi^{-1}\left(\frac{\eta\target + 1}{2}\right)$
\end{proof}

By solving \cref{eq:gaussian-trade-off} for $\alpha$, we also have an exact expression for calibrating to a given level of $\fpr\target, \fnr\target$:
\begin{corollary}[FPR/FNR calibration for GM]
For a Gaussian mechanism $\mechanism_\sigma(S)$, and target $\fpr\target \geq 0$, $\fnr\target \geq 0$ such that $\alpha\target + \beta\target \leq 1$, choosing $\sigma$ as:
\begin{equation}
    \sigma = \frac{\Delta_2}{\Phi^{-1}(1 - \fpr\target) - \Phi^{-1}(\fnr\target)}
\end{equation}
ensures that adversary's FNR and FPR rates are lower bounded by $\alpha\target$ and $\beta\target$, respectively.
\end{corollary}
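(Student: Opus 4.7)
The plan is to solve the calibration problem in Equation~\ref{eq:f-calibration} in closed form by exploiting the exact trade-off curve of the Gaussian mechanism given in \cref{stmt:gm-props}. Concretely, by \cref{eq:gaussian-trade-off} we have $f_\sigma(\alpha) = \Phi\bigl(\Phi^{-1}(1-\alpha) - \mu\bigr)$ where $\mu = \Delta_2/\sigma$. Since we want the most powerful attack at FPR $\alpha\target$ to have FNR at least $\beta\target$, we need to enforce the single inequality $f_\sigma(\alpha\target) \geq \beta\target$.

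First, I would plug $\alpha\target$ into the closed form and rewrite the calibration constraint as
\begin{equation}
\Phi\bigl(\Phi^{-1}(1-\alpha\target) - \mu\bigr) \;\geq\; \beta\target.
\end{equation}
Using strict monotonicity of $\Phi$ (and hence of $\Phi^{-1}$ on $(0,1)$), this is equivalent to
\begin{equation}
\Phi^{-1}(1-\alpha\target) - \mu \;\geq\; \Phi^{-1}(\beta\target),
\end{equation}
which rearranges to $\mu \leq \Phi^{-1}(1-\alpha\target) - \Phi^{-1}(\beta\target)$. Substituting $\mu = \Delta_2/\sigma$ yields $\sigma \geq \Delta_2 / \bigl(\Phi^{-1}(1-\alpha\target) - \Phi^{-1}(\beta\target)\bigr)$. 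Since noise scale is monotone in privacy and we are minimizing $\sigma$, taking equality gives the stated formula.

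The two small things to check are that the denominator is well-defined and positive, and that the problem is feasible. The assumption $\alpha\target + \beta\target \leq 1$ gives $1 - \alpha\target \geq \beta\target$, so by monotonicity of $\Phi^{-1}$ we have $\Phi^{-1}(1-\alpha\target) \geq \Phi^{-1}(\beta\target)$, ensuring the denominator is nonnegative. Strict positivity (so that $\sigma$ is finite) follows whenever $\alpha\target + \beta\target < 1$; the boundary case $\alpha\target + \beta\target = 1$ corresponds to the random-guessing trade-off curve and is trivially achieved by $\sigma = \infty$. We also implicitly need $\alpha\target, \beta\target \in (0,1)$ so that $\Phi^{-1}$ is finite; the corner cases $\alpha\target = 0$ or $\beta\target = 0$ force $\sigma = \infty$ since the Gaussian mechanism has nowhere-vanishing density and so cannot achieve zero error at either endpoint, which is consistent with the formula.

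I do not anticipate any real obstacle here: the argument is just inversion of a monotone composition of $\Phi$. The only subtlety worth flagging explicitly in the write-up is that the conversion between ``$f(\alpha\target) \geq \beta\target$'' and the region-based constraint $\inf_{0 \leq \alpha \leq \alpha\target} f(\alpha) \geq \beta\target$ relies on monotonicity of the trade-off curve, which was already justified in the paragraph preceding \cref{eq:f-calibration}, so it applies here without modification.
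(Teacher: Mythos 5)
Your proof is correct and follows exactly the route the paper intends: the paper itself only sketches this corollary with the remark ``By solving \cref{eq:gaussian-trade-off} for $\alpha$,'' and your argument — imposing $f_\sigma(\fpr\target) \geq \fnr\target$ on the exact Gaussian trade-off curve and inverting the monotone composition of $\Phi$ to isolate $\mu = \Delta_2/\sigma$ — is precisely that computation, with the added (and welcome) care about positivity of the denominator under $\fpr\target + \fnr\target \leq 1$ and the degenerate corner cases.
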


Note that using the exact expressions above to calibrate Gaussian mechanism offer only computational advantages compared the method in the main body. In terms of resulting noise scale $\sigma$, the results are the same as with generic PLRV-based calibration up to a numerical approximation error.
\section{Additional Experiments, Details, and Figures}
\label{app:exp}

\subsection{Computing Resources}
We use a commodity machine with AMD Ryzen 5 2600 six-core CPU, 16GB of RAM, and an Nvidia GeForce RTX 4070 GPU with 16GB of VRAM to run our experiments. All experiments with deep learning take up to four hours to finish.

\subsection{Experimental Setup}
In all our experimental results, the neighborhood relation $S \simeq S'$ is the add-remove relation, i.e., $S \simeq S'$ iff $|S~\Delta~S'| = 1$, which is the standard relation used by modern DP-SGD accountants. See more on implementation details related to the neighborhood relation in \cref{app:practical-considerations}.

\paragraph{Text Sentiment Classification} We follow \citet{yu2021differentially} to finetune a GPT-2 (small)~\cite{radford2019language} using LoRA~\cite{hu2021lora} with DP-SGD on the SST-2 sentiment classification task~\cite{socher2013recursive} from the GLUE benchmark~\cite{wang2018glue}. We use the Poisson subsampling probability $p \approx 0.004$ corresponding to expected batch size of 256, gradient clipping norm of $\Delta_2 = 1.0$, and finetune for three epochs with LoRA of dimension 4 and scaling factor of 32. We vary the noise multiplier $\sigma \in \{0.5715, 0.6072, 0.6366, 0.6945, 0.7498\}$ approximately corresponding to $\varepsilon \in \{3.95, 3.2, 2.7, 1.9, 1.45\}$, respectively, at $\delta = 10^{-5}$. We use the default training split of the SST-2 dataset containing \num{67348} examples for finetuning, and the default validation split containing 872 examples as a test set.

\paragraph{Image Classification} We follow \citet{tramer2021differentially} to train a convolutional neural network~\citep[Table 9, Appendix]{tramer2021differentially} over the ScatterNet features~\cite{oyallon2015deep} on the CIFAR-10~\cite{krizhevsky2009learning} image classification dataset. We use the Poisson subsampling probability of $p \approx 0.16$  corresponding to expected batch size of 8192, learning rate of 4, Nesterov momentum of 0.9, and gradient clipping norm of $\Delta_2 = 0.1$. We  train for up to $100$ epochs. We vary the gradient noise multiplier $\nicefrac{\sigma}{\Delta_2} \in \{4, 5, 6, 8, 10\}$, corresponding to $\varepsilon \in \{5, 3.86, 3.15, 2.31, 1.63\}$, respectively, at $\delta = 10^{-5}$. We use the default 50K/10K train/test split of CIFAR-10.

\subsection{Additional Experiments with Histogram Release}

Histogram release is a simple but common usage of DP, appearing as a building block, e.g., in private query interfaces~\cite{gaboardi2020programming}. To evaluate attack-aware noise calibration for histogram release, we use the well-known ADULT dataset~\cite{adult} comprising a small set of US Census data. We simulate the release of the histogram of the `Education' attribute (with 16 distinct values, e.g., ``High school'', ``Bachelor’s'', etc.) using the standard Gaussian mechanism with post-processing to ensure that the counts are positive integers. To measure utility, we use the $L_1$ distance (error) between the original histogram and the released private histogram.

\cref{fig:histograms} shows the increase in utility if we calibrate the noise of the mechanism using the direct calibration algorithm to a given level of FPR $\alpha\target$ and FNR $\beta\target$ vs. standard calibration over 100 simulated releases with different random seeds. In certain cases, e.g., for $\alpha\target = 0.1$ and $\beta\target = 0.75$, our approach decreases the error by approx. $3\times$ from three erroneous counts on average to one.

\subsection{Software} We use the following key open-source software:
\begin{itemize}
    \item PyTorch~\cite{pytorch} for implementing neural networks.
    \item huggingface~\cite{huggingface} suite of packages for training language models.
    \item opacus~\cite{opacus} for training PyTorch neural networks with DP-SGD.
    \item dp-transformers~\cite{dp-transformers} for differentially private finetuning of language models.
    \item numpy~\cite{numpy}, scipy~\cite{scipy}, pandas~\cite{pandas}, and jupyter~\cite{jupyter} for numeric analyses.
    \item seaborn~\cite{seaborn} for visualizations.
\end{itemize}

\clearpage

\ifthenelse{\boolean{preprint}}{}{
\begin{figure}
    \centering
    \resizebox{\figwidth}{!}{
    \begin{tikzpicture}[
        declare function={normcdf(\x) = 1/(1 + exp(-0.07056*((\x)/1)^3 - 1.5976*(\x)/1));
                          _invpart(\p) = -5.531 * (((1 - \p) / \p)^0.1193 - 1);
                          invnormcdf(\p) = (\p >= 0.5) * _invpart(\p) + (\p < 0.5) * (-_invpart(1 -\p));
        }
    ]
        \begin{axis}[
            xlabel={Attack FPR, $\fpr$},
            ylabel={Attack FNR, $\fnr$},
            domain=0:1,
            xmin=0,
            xmax=1,
            ymin=0,
            ymax=1,
            samples=300, %
            legend pos=north east, %
            legend cell align={left},
            axis line style={draw opacity=0.5},
            trim axis left,
            trim axis right,
            enlargelimits=false,
            xtick={0, 0.5, 1},
            ytick={0, 0.5, 1},
            extra y ticks={0.9, 0.165},
            extra y tick labels={$1 - \delta$, $\frac{1 - \delta}{1 + e^\varepsilon}$},
            extra y tick style={
                grid=major,
                yticklabel style={yshift=0.7ex, anchor=east}
            },
            extra x ticks={0.9, 0.165},
            extra x tick labels={$1 - \delta$, $\frac{1 - \delta}{1 + e^\varepsilon}$},
            extra x tick style={
                grid=major,
                yticklabel style={xshift=0.7ex, anchor=north}
            },
        ]
        \def\eps{1.5}
        \def\mu{1.1}
        \def\etacomputed{(exp(\eps) - 1 + 0.2) / (1 + exp(\eps))}
        \addplot[color=gray, domain=0:1, ultra thick, fill, fill opacity=0.2] {max(-10000 * x + 1, 1 - 0.1 - exp(\eps) * x, exp(-\eps) * (1 - 0.1 - x), 0};
        \addlegendentry{Feasible region by DP}
        
        \addplot[color=gray, domain=0:1, ultra thick, dotted, fill, fill opacity=0.2] {normcdf(invnormcdf(1 - x) - \mu)};
        \addlegendentry{Exact feasible region}
        \end{axis}
    \end{tikzpicture}
    }
    \caption{Trade-off curves of a Gaussian mechanism that satisfies $(\varepsilon, \delta)$-DP.
    Each curve shows a boundary of the feasible region (greyed out) of possible membership inference attack FPR ($\fpr$) and FNR ($\fnr$) pairs. The solid curve shows the limit of the feasible region guaranteed by DP via \cref{eq:dp-to-f}, which is a conservative overestimate of attack success rates compared to the exact trade-off curve (dotted). The maximum advantage $\eta$ is achieved with FPR and FNR at the point closest to the origin.}
    \label{fig:dp-curve}
\end{figure}
}

\begin{figure}[t]
    \centering
    \resizebox{\figwidth}{!}{
    \begin{tikzpicture}[
        declare function={normcdf(\x) = 1/(1 + exp(-0.07056*((\x)/1)^3 - 1.5976*(\x)/1));
                          _invpart(\p) = -5.531 * (((1 - \p) / \p)^0.1193 - 1);
                          invnormcdf(\p) = (\p >= 0.5) * _invpart(\p) + (\p < 0.5) * (-_invpart(1 -\p));
        }
    ]

        \begin{axis}[
            xlabel={Attack FPR, $\fpr$},
            ylabel={Attack FNR, $\fnr$},
            domain=0:1,
            xmin=0,
            xmax=1,
            ymin=0,
            ymax=1,
            samples=100, %
            legend pos=north east, %
            legend cell align={left},
            axis line style={draw opacity=0.5},
            trim axis left,
            trim axis right,
            enlargelimits=false,
            font={\sffamily},
            xtick={0, 0.5, 1},
            ytick={0, 0.5, 1}
        ]
        \def\eps{0.73}
        \def\delta{0.05}
        \def\etacomputed{(exp(\eps) - 1 + 2*\delta) / (1 + exp(\eps))}

        \addplot[color=seabornblue, domain=0:1] {max(0, 1 - \delta - exp(\eps) * x, exp(-\eps) * (1 - \delta - x)};
        \addlegendentry{Standard calibration (SC)}

        \def\mucf{0.6}
        \addplot[color=seabornblue, domain=0:1, ultra thick, dotted] {normcdf(invnormcdf(1 - x) - \mucf)};
        \addlegendentry{Exact SC trade-off curve}

        \addplot[color=seabornorange, domain=0:1 - \etacomputed] {1 - \etacomputed - x)};
        \addlegendentry{Advantage calibration (AC)}

        \def\mucal{0.9999}
        \addplot[color=seabornorange, domain=0:1, ultra thick, dotted] {normcdf(invnormcdf(1 - x) - \mucal)};
        \addlegendentry{Exact AC trade-off curve}

        \end{axis}
    \end{tikzpicture}
    }
    \caption{The increase in attack sensitivity due to calibration for advantage is less drastic for Gaussian mechanism than for a generic $(\varepsilon, \delta)$-DP mechanism. 
    }
    \label{fig:adv-pitfalls-exact}
\end{figure}

\begin{figure}
    \centering
    \resizebox{\linewidth}{!}{
    \input{images/histogram_err_rates_calibration.pgf}
    }
    \caption{Direct calibration to attack FNR/FPR reduces average $L_1$ error in histogram release with Gaussian mechanism. The confidence bands are $95\%$ CI over 100 simulated releases.}
    \label{fig:histograms}
\end{figure}

\clearpage

\newpage
\section*{NeurIPS Paper Checklist}

\begin{enumerate}

\item {\bf Claims}
    \item[] Question: Do the main claims made in the abstract and introduction accurately reflect the paper's contributions and scope?
    \item[] Answer: \answerYes{} %
    \item[] Justification: Claims in the abstract/intro succinctly represent the claims in the main body.
    \item[] Guidelines:
    \begin{itemize}
        \item The answer NA means that the abstract and introduction do not include the claims made in the paper.
        \item The abstract and/or introduction should clearly state the claims made, including the contributions made in the paper and important assumptions and limitations. A No or NA answer to this question will not be perceived well by the reviewers. 
        \item The claims made should match theoretical and experimental results, and reflect how much the results can be expected to generalize to other settings. 
        \item It is fine to include aspirational goals as motivation as long as it is clear that these goals are not attained by the paper. 
    \end{itemize}

\item {\bf Limitations}
    \item[] Question: Does the paper discuss the limitations of the work performed by the authors?
    \item[] Answer: \answerYes{} %
    \item[] Justification: \cref{sec:adv-calibration} discusses in detail the limitations of advantage calibration. 
    \cref{sec:conclusions} discusses limitations and future work for the whole paper.
    \item[] Guidelines:
    \begin{itemize}
        \item The answer NA means that the paper has no limitation while the answer No means that the paper has limitations, but those are not discussed in the paper. 
        \item The authors are encouraged to create a separate ``Limitations" section in their paper.
        \item The paper should point out any strong assumptions and how robust the results are to violations of these assumptions (e.g., independence assumptions, noiseless settings, model well-specification, asymptotic approximations only holding locally). The authors should reflect on how these assumptions might be violated in practice and what the implications would be.
        \item The authors should reflect on the scope of the claims made, e.g., if the approach was only tested on a few datasets or with a few runs. In general, empirical results often depend on implicit assumptions, which should be articulated.
        \item The authors should reflect on the factors that influence the performance of the approach. For example, a facial recognition algorithm may perform poorly when image resolution is low or images are taken in low lighting. Or a speech-to-text system might not be used reliably to provide closed captions for online lectures because it fails to handle technical jargon.
        \item The authors should discuss the computational efficiency of the proposed algorithms and how they scale with dataset size.
        \item If applicable, the authors should discuss possible limitations of their approach to address problems of privacy and fairness.
        \item While the authors might fear that complete honesty about limitations might be used by reviewers as grounds for rejection, a worse outcome might be that reviewers discover limitations that aren't acknowledged in the paper. The authors should use their best judgment and recognize that individual actions in favor of transparency play an important role in developing norms that preserve the integrity of the community. Reviewers will be specifically instructed to not penalize honesty concerning limitations.
    \end{itemize}

\item {\bf Theory Assumptions and Proofs}
    \item[] Question: For each theoretical result, does the paper provide the full set of assumptions and a complete (and correct) proof?
    \item[] Answer: \answerYes{} %
    \item[] Justification: The theoretical results are within the standard setup of differential privacy detailed in \cref{sec:background-dp}.
    \item[] Guidelines:
    \begin{itemize}
        \item The answer NA means that the paper does not include theoretical results. 
        \item All the theorems, formulas, and proofs in the paper should be numbered and cross-referenced.
        \item All assumptions should be clearly stated or referenced in the statement of any theorems.
        \item The proofs can either appear in the main paper or the supplemental material, but if they appear in the supplemental material, the authors are encouraged to provide a short proof sketch to provide intuition. 
        \item Inversely, any informal proof provided in the core of the paper should be complemented by formal proofs provided in appendix or supplemental material.
        \item Theorems and Lemmas that the proof relies upon should be properly referenced. 
    \end{itemize}

    \item {\bf Experimental Result Reproducibility}
    \item[] Question: Does the paper fully disclose all the information needed to reproduce the main experimental results of the paper to the extent that it affects the main claims and/or conclusions of the paper (regardless of whether the code and data are provided or not)?
    \item[] Answer: \answerYes{} %
    \item[] Justification: We provide the detailed information on reproducing the experimental results in \cref{app:exp}. Moreover, we link the code along with the instructions for reproducing.
    \item[] Guidelines:
    \begin{itemize}
        \item The answer NA means that the paper does not include experiments.
        \item If the paper includes experiments, a No answer to this question will not be perceived well by the reviewers: Making the paper reproducible is important, regardless of whether the code and data are provided or not.
        \item If the contribution is a dataset and/or model, the authors should describe the steps taken to make their results reproducible or verifiable. 
        \item Depending on the contribution, reproducibility can be accomplished in various ways. For example, if the contribution is a novel architecture, describing the architecture fully might suffice, or if the contribution is a specific model and empirical evaluation, it may be necessary to either make it possible for others to replicate the model with the same dataset, or provide access to the model. In general. releasing code and data is often one good way to accomplish this, but reproducibility can also be provided via detailed instructions for how to replicate the results, access to a hosted model (e.g., in the case of a large language model), releasing of a model checkpoint, or other means that are appropriate to the research performed.
        \item While NeurIPS does not require releasing code, the conference does require all submissions to provide some reasonable avenue for reproducibility, which may depend on the nature of the contribution. For example
        \begin{enumerate}
            \item If the contribution is primarily a new algorithm, the paper should make it clear how to reproduce that algorithm.
            \item If the contribution is primarily a new model architecture, the paper should describe the architecture clearly and fully.
            \item If the contribution is a new model (e.g., a large language model), then there should either be a way to access this model for reproducing the results or a way to reproduce the model (e.g., with an open-source dataset or instructions for how to construct the dataset).
            \item We recognize that reproducibility may be tricky in some cases, in which case authors are welcome to describe the particular way they provide for reproducibility. In the case of closed-source models, it may be that access to the model is limited in some way (e.g., to registered users), but it should be possible for other researchers to have some path to reproducing or verifying the results.
        \end{enumerate}
    \end{itemize}

\item {\bf Open access to data and code}
    \item[] Question: Does the paper provide open access to the data and code, with sufficient instructions to faithfully reproduce the main experimental results, as described in supplemental material?
    \item[] Answer: \answerYes{} %
    \item[] Justification: We use common openly available benchmark datasets. We have published the code on the Github platform.
    \item[] Guidelines:
    \begin{itemize}
        \item The answer NA means that paper does not include experiments requiring code.
        \item Please see the NeurIPS code and data submission guidelines (\url{https://nips.cc/public/guides/CodeSubmissionPolicy}) for more details.
        \item While we encourage the release of code and data, we understand that this might not be possible, so “No” is an acceptable answer. Papers cannot be rejected simply for not including code, unless this is central to the contribution (e.g., for a new open-source benchmark).
        \item The instructions should contain the exact command and environment needed to run to reproduce the results. See the NeurIPS code and data submission guidelines (\url{https://nips.cc/public/guides/CodeSubmissionPolicy}) for more details.
        \item The authors should provide instructions on data access and preparation, including how to access the raw data, preprocessed data, intermediate data, and generated data, etc.
        \item The authors should provide scripts to reproduce all experimental results for the new proposed method and baselines. If only a subset of experiments are reproducible, they should state which ones are omitted from the script and why.
        \item At submission time, to preserve anonymity, the authors should release anonymized versions (if applicable).
        \item Providing as much information as possible in supplemental material (appended to the paper) is recommended, but including URLs to data and code is permitted.
    \end{itemize}

\item {\bf Experimental Setting/Details}
    \item[] Question: Does the paper specify all the training and test details (e.g., data splits, hyperparameters, how they were chosen, type of optimizer, etc.) necessary to understand the results?
    \item[] Answer: \answerYes{} %
    \item[] Justification: We provide the information on the machine learning details in the main body as well as in \cref{app:exp}.
    \item[] Guidelines:
    \begin{itemize}
        \item The answer NA means that the paper does not include experiments.
        \item The experimental setting should be presented in the core of the paper to a level of detail that is necessary to appreciate the results and make sense of them.
        \item The full details can be provided either with the code, in appendix, or as supplemental material.
    \end{itemize}

\item {\bf Experiment Statistical Significance}
    \item[] Question: Does the paper report error bars suitably and correctly defined or other appropriate information about the statistical significance of the experiments?
    \item[] Answer: \answerNo{} %
    \item[] Justification: In our setting, we can directly approximate the theoretical quantities of interest (i.e., the level of privacy) without the need for empirical statistical methods and the corresponding uncertainty estimation. For the empirically evaluated model accuracy values, we only use one seed in the main suite of experiments for computational reasons. In the additional experiments in \cref{app:exp}, we provide $95\%$ confidence bands.
    \item[] Guidelines:
    \begin{itemize}
        \item The answer NA means that the paper does not include experiments.
        \item The authors should answer ``Yes" if the results are accompanied by error bars, confidence intervals, or statistical significance tests, at least for the experiments that support the main claims of the paper.
        \item The factors of variability that the error bars are capturing should be clearly stated (for example, train/test split, initialization, random drawing of some parameter, or overall run with given experimental conditions).
        \item The method for calculating the error bars should be explained (closed form formula, call to a library function, bootstrap, etc.)
        \item The assumptions made should be given (e.g., Normally distributed errors).
        \item It should be clear whether the error bar is the standard deviation or the standard error of the mean.
        \item It is OK to report 1-sigma error bars, but one should state it. The authors should preferably report a 2-sigma error bar than state that they have a 96\% CI, if the hypothesis of Normality of errors is not verified.
        \item For asymmetric distributions, the authors should be careful not to show in tables or figures symmetric error bars that would yield results that are out of range (e.g. negative error rates).
        \item If error bars are reported in tables or plots, The authors should explain in the text how they were calculated and reference the corresponding figures or tables in the text.
    \end{itemize}

\item {\bf Experiments Compute Resources}
    \item[] Question: For each experiment, does the paper provide sufficient information on the computer resources (type of compute workers, memory, time of execution) needed to reproduce the experiments?
    \item[] Answer: \answerYes{} %
    \item[] Justification: Our experiments only require commodity hardware. We detail the requirements in \cref{app:exp}.
    \item[] Guidelines:
    \begin{itemize}
        \item The answer NA means that the paper does not include experiments.
        \item The paper should indicate the type of compute workers CPU or GPU, internal cluster, or cloud provider, including relevant memory and storage.
        \item The paper should provide the amount of compute required for each of the individual experimental runs as well as estimate the total compute. 
        \item The paper should disclose whether the full research project required more compute than the experiments reported in the paper (e.g., preliminary or failed experiments that didn't make it into the paper). 
    \end{itemize}
    
\item {\bf Code Of Ethics}
    \item[] Question: Does the research conducted in the paper conform, in every respect, with the NeurIPS Code of Ethics \url{https://neurips.cc/public/EthicsGuidelines}?
    \item[] Answer: \answerYes{} %
    \item[] Justification: Neither the research process itself nor the outcomes of the research carry significant potential for harm.
    \item[] Guidelines:
    \begin{itemize}
        \item The answer NA means that the authors have not reviewed the NeurIPS Code of Ethics.
        \item If the authors answer No, they should explain the special circumstances that require a deviation from the Code of Ethics.
        \item The authors should make sure to preserve anonymity (e.g., if there is a special consideration due to laws or regulations in their jurisdiction).
    \end{itemize}

\item {\bf Broader Impacts}
    \item[] Question: Does the paper discuss both potential positive societal impacts and negative societal impacts of the work performed?
    \item[] Answer: \answerYes{} %
    \item[] Justification: The topic of our paper is concerned with a social issue of privacy in machine learning and statistical analyses, and our work aims to improve the state of the art in the area. Although our work is mostly technical, we take a broader look in \cref{sec:intro,sec:conclusions}.
    \item[] Guidelines:
    \begin{itemize}
        \item The answer NA means that there is no societal impact of the work performed.
        \item If the authors answer NA or No, they should explain why their work has no societal impact or why the paper does not address societal impact.
        \item Examples of negative societal impacts include potential malicious or unintended uses (e.g., disinformation, generating fake profiles, surveillance), fairness considerations (e.g., deployment of technologies that could make decisions that unfairly impact specific groups), privacy considerations, and security considerations.
        \item The conference expects that many papers will be foundational research and not tied to particular applications, let alone deployments. However, if there is a direct path to any negative applications, the authors should point it out. For example, it is legitimate to point out that an improvement in the quality of generative models could be used to generate deepfakes for disinformation. On the other hand, it is not needed to point out that a generic algorithm for optimizing neural networks could enable people to train models that generate Deepfakes faster.
        \item The authors should consider possible harms that could arise when the technology is being used as intended and functioning correctly, harms that could arise when the technology is being used as intended but gives incorrect results, and harms following from (intentional or unintentional) misuse of the technology.
        \item If there are negative societal impacts, the authors could also discuss possible mitigation strategies (e.g., gated release of models, providing defenses in addition to attacks, mechanisms for monitoring misuse, mechanisms to monitor how a system learns from feedback over time, improving the efficiency and accessibility of ML).
    \end{itemize}
    
\item {\bf Safeguards}
    \item[] Question: Does the paper describe safeguards that have been put in place for responsible release of data or models that have a high risk for misuse (e.g., pretrained language models, image generators, or scraped datasets)?
    \item[] Answer: \answerNA{} %
    \item[] Justification: \answerNA{}
    \item[] Guidelines:
    \begin{itemize}
        \item The answer NA means that the paper poses no such risks.
        \item Released models that have a high risk for misuse or dual-use should be released with necessary safeguards to allow for controlled use of the model, for example by requiring that users adhere to usage guidelines or restrictions to access the model or implementing safety filters. 
        \item Datasets that have been scraped from the Internet could pose safety risks. The authors should describe how they avoided releasing unsafe images.
        \item We recognize that providing effective safeguards is challenging, and many papers do not require this, but we encourage authors to take this into account and make a best faith effort.
    \end{itemize}

\item {\bf Licenses for existing assets}
    \item[] Question: Are the creators or original owners of assets (e.g., code, data, models), used in the paper, properly credited and are the license and terms of use explicitly mentioned and properly respected?
    \item[] Answer: \answerYes{} %
    \item[] Justification: We cite the dataset sources as well the sources for the key pieces of software used for the experimental evaluations and analyses in the main body and \cref{app:exp}.
    \item[] Guidelines:
    \begin{itemize}
        \item The answer NA means that the paper does not use existing assets.
        \item The authors should cite the original paper that produced the code package or dataset.
        \item The authors should state which version of the asset is used and, if possible, include a URL.
        \item The name of the license (e.g., CC-BY 4.0) should be included for each asset.
        \item For scraped data from a particular source (e.g., website), the copyright and terms of service of that source should be provided.
        \item If assets are released, the license, copyright information, and terms of use in the package should be provided. For popular datasets, \url{paperswithcode.com/datasets} has curated licenses for some datasets. Their licensing guide can help determine the license of a dataset.
        \item For existing datasets that are re-packaged, both the original license and the license of the derived asset (if it has changed) should be provided.
        \item If this information is not available online, the authors are encouraged to reach out to the asset's creators.
    \end{itemize}

\item {\bf New Assets}
    \item[] Question: Are new assets introduced in the paper well documented and is the documentation provided alongside the assets?
    \item[] Answer: \answerNA{} %
    \item[] Justification: \answerNA{}
    \item[] Guidelines:
    \begin{itemize}
        \item The answer NA means that the paper does not release new assets.
        \item Researchers should communicate the details of the dataset/code/model as part of their submissions via structured templates. This includes details about training, license, limitations, etc. 
        \item The paper should discuss whether and how consent was obtained from people whose asset is used.
        \item At submission time, remember to anonymize your assets (if applicable). You can either create an anonymized URL or include an anonymized zip file.
    \end{itemize}

\item {\bf Crowdsourcing and Research with Human Subjects}
    \item[] Question: For crowdsourcing experiments and research with human subjects, does the paper include the full text of instructions given to participants and screenshots, if applicable, as well as details about compensation (if any)? 
    \item[] Answer: \answerNA{} %
    \item[] Justification: \answerNA{}
    \item[] Guidelines:
    \begin{itemize}
        \item The answer NA means that the paper does not involve crowdsourcing nor research with human subjects.
        \item Including this information in the supplemental material is fine, but if the main contribution of the paper involves human subjects, then as much detail as possible should be included in the main paper. 
        \item According to the NeurIPS Code of Ethics, workers involved in data collection, curation, or other labor should be paid at least the minimum wage in the country of the data collector. 
    \end{itemize}

\item {\bf Institutional Review Board (IRB) Approvals or Equivalent for Research with Human Subjects}
    \item[] Question: Does the paper describe potential risks incurred by study participants, whether such risks were disclosed to the subjects, and whether Institutional Review Board (IRB) approvals (or an equivalent approval/review based on the requirements of your country or institution) were obtained?
    \item[] Answer: \answerNA{} %
    \item[] Justification: \answerNA{}
    \item[] Guidelines:
    \begin{itemize}
        \item The answer NA means that the paper does not involve crowdsourcing nor research with human subjects.
        \item Depending on the country in which research is conducted, IRB approval (or equivalent) may be required for any human subjects research. If you obtained IRB approval, you should clearly state this in the paper. 
        \item We recognize that the procedures for this may vary significantly between institutions and locations, and we expect authors to adhere to the NeurIPS Code of Ethics and the guidelines for their institution. 
        \item For initial submissions, do not include any information that would break anonymity (if applicable), such as the institution conducting the review.
    \end{itemize}

\end{enumerate}

\end{document}